\documentclass[twoside]{article}

\usepackage[accepted]{aistats2026}

\usepackage[utf8]{inputenc} %
\usepackage[T1]{fontenc}    %
\usepackage{hyperref}       %
\usepackage{url}            %
\usepackage{booktabs}       %
\usepackage{amsfonts}       %
\usepackage{nicefrac}       %
\usepackage{microtype}      %
\usepackage{graphicx}
\usepackage{algorithm}
\usepackage{algorithmic}
\usepackage{amsmath} 
\usepackage{amssymb}
\usepackage{caption}
\usepackage{enumitem}
\usepackage{subcaption}
\usepackage{caption}
\usepackage{float}
\usepackage{amsthm}
\usepackage{cleveref}

\usepackage{chngcntr} %
\usepackage[dvipsnames]{xcolor}
\usepackage{tikz}
\usetikzlibrary{arrows.meta,positioning,fit,calc}

\hypersetup{
colorlinks,
linkcolor=NavyBlue
,citecolor=PineGreen
,filecolor=Mulberry
,urlcolor=NavyBlue
,menucolor=BrickRed
,runcolor=Mulberry
}

\newtheorem{observation}{Observation}
\newtheorem{hypothesis}{Hypothesis}

\newtheorem{proposition}{Proposition}
\newtheorem*{proposition*}{Proposition}
\newtheorem{theorem}{Theorem}

\crefname{hypothesis}{Hypothesis}{Hypotheses}
\Crefname{hypothesis}{Hypothesis}{Hypotheses}
\crefname{observation}{Observation}{Observations}
\Crefname{observation}{Observation}{Observations}
\Crefname{theorem}{Theorem}{Theorems}
\crefname{appendix}{appendix}{appendices}
\Crefname{appendix}{Appendix}{Appendices}

\usepackage[round]{natbib}

\begin{document}

\runningtitle{Standard Acquisition Is Sufficient for Asynchronous Bayesian Optimization}

\twocolumn[

\aistatstitle{Standard Acquisition Is Sufficient for\\Asynchronous Bayesian Optimization}

\aistatsauthor{ Ben Riegler \And James Odgers \And  Vincent Fortuin }

\aistatsaddress{ Technical University of Munich \\ Helmholtz AI \\ MCML \\ \texttt{ben.riegler@tum.de} \And  UTN \\ Helmholtz AI \\ MCML \And UTN \\ Helmholtz AI\\ MCML } ]

\begin{abstract}
Asynchronous Bayesian optimization is widely used for gradient-free optimization in domains with independent parallel experiments and varying evaluation times. Existing methods posit that standard acquisitions lead to redundant and repeated queries, proposing complex solutions to enforce diversity in queries. 
Challenging this fundamental premise, we show that methods, like the Upper Confidence Bound, can in fact achieve theoretical guarantees essentially equivalent to those of sequential Thompson sampling. 
A conceptual analysis of asynchronous Bayesian optimization reveals that existing works neglect intermediate posterior updates, which we find to be generally sufficient to avoid redundant queries. Further investigation shows that by penalizing busy locations, diversity-enforcing methods can over-explore in asynchronous settings, reducing their performance. Our extensive experiments demonstrate that simple standard acquisition functions match or outperform purpose-built asynchronous methods across synthetic and real-world tasks.
\end{abstract}

\section{INTRODUCTION}
\label{sec:intro}

A common problem in many scientific, machine learning, and engineering applications is optimizing functions, which may be evaluated point-wise via running an experiment, but for which there is no known analytical form and no gradient information. Such tasks are known as black-box optimization problems and are often approached using Bayesian optimization (BO). Applications include materials discovery \citep{packwood2017bayesian,cinquin2025actually}, chemical design \citep{griffiths2020constrained, wang2022bayesian}, as well as nuclear and accelerator physics \citep{roussel2024bayesian, ekstrom2019bayesian}. The main application in machine learning is hyperparameter tuning \citep{chen2018bayesian, snoek2012practical, wu2019hyperparameter, klein2017fast}.

\begin{figure*}[t]
    \centering

    \includegraphics{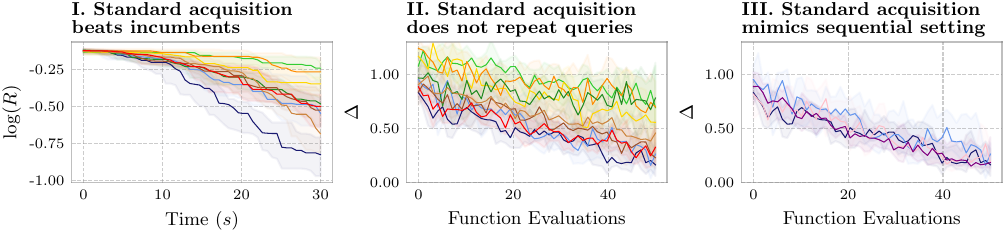}

    \includegraphics{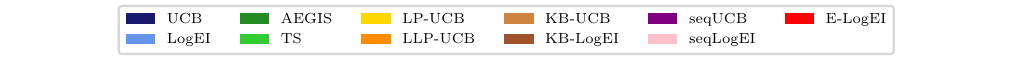}

  \caption{We find that (I.) standard acquisition outperforms or matches purpose-built methods for asynchronous Bayesian optimization. (II.) Comparing the distances of queries to the closest busy location, we see that standard acquisition exhibits the desirable transition from exploration to exploitation, but does not repeat queries. (III.) Standard acquisition functions query at similar distances as their optimally informed sequential counterparts, suggesting that nearby sampling is a desirable feature and should not be prohibited. See \Cref{sec:prelim} for a description of the methods and \Cref{sec:query_dists} for a formal definition of the distance $\Delta$. Results are shown on Ackley~$(d=10, q=8)$.}\label{fig:fig1}
  
\end{figure*}

In practice, it is often possible to run many experiments in parallel.
These might correspond to different devices in a wet lab or different GPUs on a compute cluster.
Due to varying function evaluation times, practitioners may opt for an asynchronous BO approach to minimize idle times \citep{zhang2020efficient,tran2022aphbo,egele2023asynchronous,koyama2022bo,frisby2021asynchronous}. In asynchronous BO, whenever a worker becomes available, a new input location must be selected without access to the outcomes of pending experiments. This challenge leads to the following, seemingly reasonable, hypothesis.
\begin{hypothesis}[Standard acquisition is wasteful in asynchronous BO]
\label{hyp:claim}
    Due to the unknown pending evaluations, it is necessary to explicitly enforce diversity in asynchronous BO queries. Standard acquisition functions' failure to do so will result in repeated or redundant queries, wasting evaluation resources, thus leading to poor BO results.
\end{hypothesis}
All existing works on asynchronous BO---explicitly or implicitly---build on \Cref{hyp:claim}, by proposing acquisition rules enforcing diversity in the asynchronous queries.
In this work, we perform the first critical examination of \Cref{hyp:claim}, both conceptually and empirically. We argue that the reasoning in \Cref{hyp:claim} is false, and by extension, proposed methods based on it may be unnecessary.

Instead, we propose to use existing standard acquisition rules, such as the Upper Confidence Bound (UCB) \citep{auer2002using,srinivas2010gaussian}  or Expected Improvement (EI) \citep{movckus1975bayesian}.
We motivate this theoretically by bounding the regret of the asynchronous standard UCB algorithm. Intuitively, we argue that 
\Cref{hyp:claim} fails to account for the update to the surrogate model with a new datum, which provides sufficient diversity in asynchronous updates. %

Empirically, we show excellent performance on an extensive suite of experiments, supporting our theoretical and conceptual insights regarding the inadequacy of \Cref{hyp:claim}. This is further investigated in an analysis of the distances of asynchronous queries to currently busy locations, revealing a similar exploration-exploitation trade-off as that in the optimally informed sequential BO.

Our contributions can be summarized as follows:
\begin{itemize}

    \item By bounding the Bayes simple regret, we show that the standard UCB in asynchronous BO achieves theoretical guarantees essentially equivalent to those of sequential Thompson sampling.

    \item We identify a conceptual flaw in the reasoning of \Cref{hyp:claim}, as it fails to account for information in the newly available datum before making the next asynchronous acquisition.

    \item We demonstrate that standard methods, such as the UCB or EI, in asynchronous BO match or outperform purpose-built ones (\Cref{fig:fig1}, I.), on synthetic and real-world optimization tasks, providing clear evidence against \Cref{hyp:claim}. 

    \item Contrary to \Cref{hyp:claim}, we show empirically that standard approaches do not query at and rarely close to currently busy locations (\Cref{fig:fig1}, II.), and present evidence that it may not be desirable to explicitly enforce diversity at all (\Cref{fig:fig1}, III.).

\end{itemize}

\section{PRELIMINARIES}
\label{sec:prelim}

\paragraph{Formal problem statement} In this work, we consider the global optimization of real-valued functions, $f: \mathcal{X} \mapsto \mathbb{R}$, on some compact domain $\mathcal{X} \subseteq \mathbb{R}^d$. It is assumed that $f(\cdot)$  may be queried at some point in the input space $x \in \mathcal{X}$, resulting in a time-delayed observation of the corresponding output $y \in \mathbb{R}$. The evaluation time varies throughout the input space. After $n$ completed evaluations, $\mathbf{y} = \{y_i\}_{i =1}^n$, at inputs $X = \{x_i\}_{i =1}^n$, we have data $D_n$. In the asynchronous setting with $q$ workers, the unknown function values at busy locations, $\mathcal{B} = \{x_j\}_{j =1}^{q-1}$, are denoted by $\mathbf{y}_b = \{y_j\}_{j =1}^{q-1}$, which we collect in unobserved data $D_b$. 

\begin{algorithm}[t]
    \caption{ \ \ Asynchronous BO with $q$ workers}
    \label{alg:asyncBO}
    \begin{algorithmic}[1]
        \REQUIRE Oracle $f(\cdot)$, acquisition function $\alpha(\cdot)$, globally stored data $D_0 = \{(x_i, y_i)\}_{i=1}^{n_0}$, time budget $T$
        \STATE \textbf{initialize:} $\mathcal{B} \leftarrow $ $\{x_{j}\}_{j=1}^q$ quasi-random
        \STATE \textbf{spawn:} $q$ worker processes
        \STATE start timer
        \FOR{$x_j \in \mathcal{B}$ }
            \STATE start worker $j$ with query $f(x_j)$
        \ENDFOR
        \WHILE{elapsed time $ < T$}
            \IF{ $j$ completes query $(x_{n+1}, y_{n+1})$} 
                \STATE $D_{n+1} \leftarrow D_n \cup \{(x_{n+1}, y_{n+1})\}$
                \STATE $\mathcal{B} \leftarrow \mathcal{B} \setminus \{x_{n+1}\}$
                \STATE $\mathcal{GP}_j \leftarrow$ fit-surrogate($\mathcal{GP}_j$, $D_{n+1}$)
                \STATE $x' \leftarrow \arg\max \limits_{x \in \mathcal{X}} \ \alpha(x \mid D_{n+1})$ 
                \STATE $\mathcal{B} \leftarrow \mathcal{B} \cup \{x'\}$
                \STATE start worker $j$ with query $f(x')$
            \ENDIF
        \ENDWHILE
        \RETURN $i^* = \arg\max_{i} y_i$ and $(x_{i^*}, y_{i^*})$
    \end{algorithmic}
\end{algorithm}

\begin{figure}[h]
  \centering

\begin{tikzpicture}[
    >={Stealth[length=3mm]},
    lane/.style={draw=none, fill=none},
    tracklabel/.style={anchor=west, align=left},
    segbusy/.style={fill=blue!90!black!60!white!90},
    segidle/.style={fill=red!90!black!50!white!90},
    query/.style={draw=black, line width=0.4pt, dash pattern=on 2pt off 1.5pt},
    axis/.style={-Stealth, thick}
]
\def\L{6}       %
\def\laneH{1.9}   %
\def\rowH{0.45}   %
\def\barH{0.10}   %
\def\gX{1.5}      %
\def\gY{0.6}      %

\def\barHbusy{0.10}   %
\def\barHidle{0.03}

\def\labelX{\gX-1.0}   %
\def\headsep{0.5}     %

\def\sepH{0.30}   %

\def\busyLineW{0.8pt}                 %
\def\busyDash{on 3pt off 2pt}         %

\tikzset{
  busyline/.style={
    draw=blue!60!white,
    line width=0.8pt,
    dash pattern=on 3pt off 2pt,
    line cap=round
  }
}

\newcommand{\busyline}[3]{%
  \draw[busyline] (#2, #1) -- ++(#3, 0);
}

\node[lane, minimum width=\L cm, minimum height=\laneH cm, anchor=west] (sync)  at (\gX, \gY+\laneH+0.4) {};
\node[lane, minimum width=\L cm, minimum height=\laneH cm, anchor=west] (async) at (\gX, \gY) {};

\newcommand{\barLLH}[5]{%
  \path[#5, draw=none, rounded corners=0.3pt]
    (#2, {#1-0.5*#4}) rectangle ++(#3, #4);
}
\newcommand{\barBusy}[3]{\barLLH{#1}{#2}{#3}{\barHbusy}{segbusy}}
\newcommand{\barIdle}[3]{\barLLH{#1}{#2}{#3}{\barHidle}{segidle}}

\newcommand{\barLL}[4]{%
  \path[#4, draw=none, rounded corners=0.3pt]
    (#2, {#1-0.5*\barH}) rectangle ++(#3, \barH);
}
\newcommand{\barLR}[4]{%
  \path[#4, draw=none, rounded corners=0.3pt]
    (#2, {#1-0.5*\barH}) rectangle (#3, {#1+0.5*\barH});
}

\newcommand{\sepmark}[3][\sepH]{%
  \draw[query] (#2, {#3-0.5*#1}) -- (#2, {#3+0.5*#1});
}

\newcommand{\seprows}[3]{%
  \draw[query] (#1, {#3-0.5*\barH}) -- (#1, {#2+0.5*\barH});
}

\def\sa{ \gY+\laneH+0.4 + \laneH - 0.75*\rowH}
\def\sb{ \gY+\laneH+0.4 + \laneH - 1.75*\rowH}
\def\sc{ \gY+\laneH+0.4 + \laneH - 2.75*\rowH}
\def\ya{ \gY + \laneH - 0.75*\rowH}
\def\yb{ \gY + \laneH - 1.75*\rowH}
\def\yc{ \gY + \laneH - 2.75*\rowH}

\node[tracklabel] at (\labelX, \sa+\headsep) {Synchronous};
\node[tracklabel] at (\labelX, \ya+\headsep) {Asynchronous};

\node[tracklabel] at (\labelX, \sa) {$w_1$};
\node[tracklabel] at (\labelX, \sb) {$w_2$};
\node[tracklabel] at (\labelX, \sc) {$w_3$};
\node[tracklabel] at (\labelX, \ya) {$w_1$};
\node[tracklabel] at (\labelX, \yb) {$w_2$};
\node[tracklabel] at (\labelX, \yc) {$w_3$};

\barLL{\sa}{\gX+0.1}{1.9}{segbusy}
\barIdle{\sa}{\gX+1.6+0.4}{0.6}
\barLL{\sa}{\gX+2.6}{3.5}{segbusy}
\busyline{\sa}{\gX+5.9}{0.6}

\barLL{\sb}{\gX+0.1}{1.3}{segbusy}
\barIdle{\sb}{\gX+1.4}{1.2}
\barLL{\sb}{\gX+2.6}{1.6}{segbusy}
\barIdle{\sb}{\gX+4.2}{1.4}
\barLL{\sb}{\gX+5.6}{0.5}{segbusy}
\busyline{\sb}{\gX+5.9}{0.6}

\barLL{\sc}{\gX+0.1}{2.5}{segbusy}
\barLL{\sc}{\gX+2.6}{2.2}{segbusy}
\barIdle{\sc}{\gX+4.8}{0.8}
\barLL{\sc}{\gX+5.6}{0.5}{segbusy}
\busyline{\sc}{\gX+5.9}{0.6}

\seprows{\gX+2.6}{\sa+0.2}{\sc-0.2}
\seprows{\gX+5.6}{\sa+0.2}{\sc-0.2}

\barLL{\ya}{\gX+0.1}{6}{segbusy}
\sepmark{\gX+2.0}{\ya}
\sepmark{\gX+5.0}{\ya}
\busyline{\ya}{\gX+5.9}{0.6}

\barLL{\yb}{\gX+0.1}{6}{segbusy}  %
\sepmark{\gX+1.4}{\yb}
\sepmark{\gX+4}{\yb}
\sepmark{\gX+5.3}{\yb}
\busyline{\yb}{\gX+5.9}{0.6}

\barLL{\yc}{\gX+0.1}{6}{segbusy}  %
\sepmark{\gX+2.6}{\yc}
\sepmark{\gX+3.8}{\yc}
\busyline{\yc}{\gX+5.9}{0.6}

\draw[axis] (\gX+0.1, 0.6) -- ++(\L+0.2,0) node[below, yshift=-3pt] {Time};

\coordinate (legendCenter) at ({\gX + 0.5*(\L+0.2)}, -0.45);

\begin{scope}[shift={(legendCenter)}]
  \node[segbusy, draw=none, rounded corners=0.3pt,
        minimum width=0.7cm, minimum height=0.18cm] at (-3.6, 0) {};
  \node[anchor=west] at (-3.0, 0) {busy};

  \node[segidle, draw=none, rounded corners=0.3pt,
        minimum width=0.7cm, minimum height=0.18cm] at (-0.5, 0) {};
  \node[anchor=west] at (0.1, 0) {idle};

  \draw[query] (2.1, -0.15) -- (2.1, 0.15); %
  \node[anchor=west] at (2.4, 0) {query};
\end{scope}

\end{tikzpicture}

  \caption{Synchronous vs.\ asynchronous BO with $q=3$ workers. The asynchronous BO can run more experiments in the same overall time.}
  \label{fig:parBO}
\end{figure}

\paragraph{Bayesian optimization} Designed for expensive-to-evaluate black-box objective functions, Bayesian optimization is a sample-efficient global optimization framework \citep{jones1998efficient}.  In order to optimize the objective, $f(\cdot)$, with mere point-wise evaluation, Bayesian optimization represents the uncertainty in the objective via a probabilistic surrogate model, built from $D_n$. The surrogate then informs an acquisition function, $\alpha: \mathcal{X} \mapsto \mathbb{R}$, proposing the next query location $x' \in \mathcal{X}$. 

See \Cref{fig:parBO} for an illustration of synchronous vs.\ asynchronous BO and \Cref{alg:asyncBO} for pseudo code of asynchronous BO with $q$ workers. Importantly, in this work we consider a fully distributed set up, where each worker performs all steps of the BO algorithm locally, with access to the globally shared data set, $D_n$. Thus, the only source of idle workers is reading from and writing to $D_n$. These operations are protected by a lock, and the resulting idle times are negligible in practice.

\paragraph{Gaussian Process surrogate}  A zero-mean Gaussian Process (GP) prior, $\mathcal{GP}(0, k_\phi(\cdot, \cdot) )$, is placed on $f(\cdot)$ and defined through a positive definite covariance function $k_\phi: \mathcal{X} \times\mathcal{X} \mapsto \mathbb{R}$, with hyperparameters $\phi$, including lengthscales $\ell \in \mathbb{R}^d$. The GP is a standard choice, as it allows for the incorporation of prior knowledge on smoothness, periodicity, and trend of $f(\cdot)$, as well as an analytical posterior \citep{williams2006gaussian}. 

For a Gaussian observation model with noise $\eta^2$ and data $ D_n = (X, \mathbf{y})$, the posterior predictive distribution at any given input location, $x$, is Gaussian with,
\begin{align}
    p(f(x) \mid D_n )= \mathcal{N}(f(x)\mid \mu(x \mid D_n), \sigma^2(x\mid D_n)).
\end{align}
The posterior mean 
and variance 
are analytically tractable as, 
\begin{align}
    \mu(x \mid D_n) &= k_\phi(x, X)M_{XX}\mathbf{y} \label{equ:post_mean} \\  
    \sigma^2(x \mid D_n) &= k_\phi(x,x) 
    -k_\phi(x, X)M_{XX}k_\phi(X, x) \label{equ:post_var}
\end{align}
with $M_{XX} \equiv (K_{XX} + \eta^2\mathbf{I}_n)^{-1}$ and $K_{XX}$ the kernel matrix of input locations $X$, i.e., $[K_{XX}]_{ij} = k_{\phi}(x_i, x_j) \ \forall \ i,j \in [n]$. 

\paragraph{Acquisition functions in asynchronous BO}
\label{sec:acqf}
In sequential and asynchronous BO alike, a new input location $x' \in \mathcal{X}$ is chosen, as soon as an evaluation resource (e.g., a GPU) becomes available.  The choice, $x'$, is made by maximizing the acquisition function. 

Standard acquisition functions considered in this work are the Upper Confidence Bound (UCB) \citep{auer2002using, srinivas2010gaussian} and the Expected Improvement (EI) \citep{movckus1975bayesian, jones1998efficient}. For numerical reasons \citep{ament2023unexpected} we work with the natural logarithm of the EI (LogEI). Due to the insistence on \cref{hyp:claim}, to the best of our knowledge, these standard heuristics have not been considered in previous literature, not even as simple baselines. Instead, numerous works have devised elaborate solutions, which can be grouped as follows. 

In asynchronous BO, \textit{Monte Carlo} sampling methods, such as the expected LogEI (E-LogEI), estimate the expected acquisition function given $q-1$ busy locations, $\mathcal{B} \in \mathcal{X}^{q-1}$, under the surrogate \citep{ginsbourger2011dealing, janusevskis2012expected, snoek2012practical}. \textit{Hallucination} heuristics, such as the Kriging Believer (KB), replace the $q-1$ unknown function values with some value, e.g., the posterior mean \citep{ginsbourger2010kriging, ginsbourger2007multi}. \textit{Randomness}-based heuristics, e.g., Thompson sampling (TS) or AEGIS, introduce a sampling step in the acquisition, aiming to ensure query diversity \citep{kandasamy2018parallelised, de2021greed}. \textit{Penalization}-based methods (LP and LLP) down-weight the acquisition function at and around busy locations \citep{alvi2019asynchronous}. A more detailed discussion of these methods can be found in \Cref{sec:app_acqf}.

\begin{figure*}[t]
    \centering

    \includegraphics{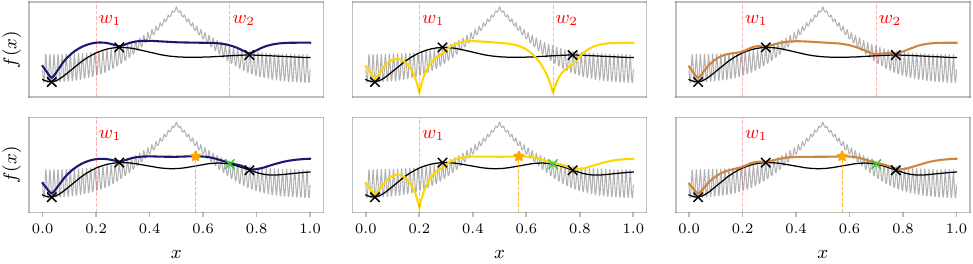}

    \includegraphics{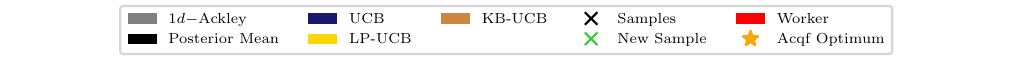}

    \caption{One full asynchronous BO step for the UCB (left), LP-UCB (middle), and the KB-UCB (right). Row 1 corresponds to line 6 in \Cref{alg:asyncBO}, with $n_0 = 3$ initial samples and $q=2$ initialized workers, $w_1$ and $w_2$. In row 2, $w_2$ finishes first, the GP-surrogate is updated with the new sample (green), and the acquisition function is optimized (orange). Notably, all methods query almost the same location, even the standard UCB, which does not take into account the busy locations. See \Cref{fig:app_iter} for subsequent iterations showing a similar trend.}

    \label{fig:iter}
\end{figure*}

\begin{figure*}[t]
    \centering

    \includegraphics{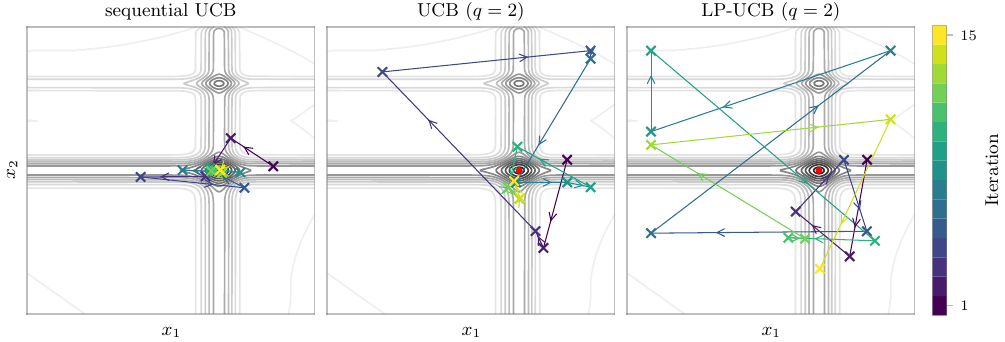}

    \includegraphics{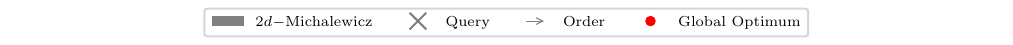}

    \caption{Sample trajectories for the sequential UCB, the asynchronous (standard) UCB, and the purpose-built LP-UCB, with the same initial data.
    Like sequential UCB, the asynchronous UCB discovers the global optimum (red) within the budget of $15$ iterations, despite performing the occasional close query. The \textit{penalization}-based method LP-UCB over-explores the search space and does not find the optimum within the budget. See \Cref{fig:app_2d} for more runs and different methods.}
    \label{fig:2d}
\end{figure*}

\section{THEORETICAL ANALYSIS}  
\label{sec:theoretical}

The prior works presented in \Cref{sec:prelim}, concerned with resolving \Cref{hyp:claim}, do not provide any theoretical or empirical evidence for repeated or redundant queries, despite using them as a motivation for their methods. In this section, we present theoretical arguments for why standard acquisition can work well in asynchronous BO and why certain purpose-built methods might not. 

\subsection{Bayes simple regret bound for asynchronous UCB}

\label{sec:bound}

While not considering busy locations, $\mathcal{B}$, in the acquisition may seem counter-intuitive, this is a commonality of our approach with work on asynchronous TS by \citet{kandasamy2018parallelised}. Perhaps surprisingly, the authors show that asynchronous TS achieves essentially equivalent theoretical performance for a given number of observations, $n$, as its optimally informed sequential counterpart. This is supported by showing that the respective Bayes simple regrets (BSR) after $n$ completed evaluations,
\begin{align}
    \text{BSR}(n) = \mathbb{E}[f(x^*)- \max_{i \in [n]} f(x_i)],
\end{align}
admit upper bounds of asymptotically the same order, $\mathcal{O}(\cdot)$, in $n$. The expectation is w.r.t.~the prior on $f$, the observation noises $\{{\epsilon_i}\}_{i=1}^n$, as well as any randomness in the algorithm, e.g., GP posterior samples in TS. 

For the asynchronous UCB, we present the following bound on the BSR, which is equivalent to the bound on asynchronous TS in \citet{kandasamy2018parallelised}.

\begin{theorem}[Informal. Bound on BSR$(n)$ for asynchronous UCB]
\label{thm:bound}
    Let $f \sim \mathcal{GP}(0, k_{\phi}(\cdot, \cdot))$. Then, for the UCB acquisition function used asynchronously (see \Cref{alg:asyncBO}), the Bayes simple regret after $n$ queries can be bounded as
    \begin{align}
        \textnormal{BSR}(n) \lesssim  \sqrt{\frac{\xi_q \log(n) \Psi_n}{n}}.
    \end{align} 
\end{theorem}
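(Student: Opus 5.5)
We follow the template of \citet{kandasamy2018parallelised} and combine it with the Bayesian UCB analysis of \citet{srinivas2010gaussian}. Index queries by the order in which they are \emph{issued}, $x_1,x_2,\dots$. Let $D_{(j)}$ denote the data available when $x_j$ was chosen. In the asynchronous scheme of \Cref{alg:asyncBO}, $D_{(j)}$ contains all but at most $q-1$ of the earlier queries $x_1,\dots,x_{j-1}$, namely those still busy.

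First, reduce the simple regret to the cumulative regret: $\text{BSR}(n)\le \frac{1}{n}\sum_{j=1}^n \mathbb{E}[f(x^*)-f(x_j)]$. We treat completed and issued indices with the usual off-by-$q$ bookkeeping; this only changes constants.

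Next, bound each instantaneous regret. Take $\beta_j \asymp \log(j^2|\mathcal{X}_j|)$, and use a discretization $\mathcal{X}_j$ for continuous $\mathcal{X}$ as in \citet{srinivas2010gaussian}. This $\beta_j$ gives the $\log n$ factor. With $U_j(x)=\mu(x\mid D_{(j)})+\beta_j^{1/2}\sigma(x\mid D_{(j)})$, we decompose
\begin{align}
f(x^*)-f(x_j) = \bigl[f(x^*)-U_j(x^*)\bigr] + \bigl[U_j(x^*)-U_j(x_j)\bigr] + \bigl[U_j(x_j)-f(x_j)\bigr].
\end{align}
The middle term is $\le 0$ because $x_j$ maximizes $U_j$. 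The first term has expectation $O(1/j^2)$ by the Gaussian tail bound, since $f(x^*)\mid D_{(j)}$ is Gaussian pointwise; a union bound over $\mathcal{X}_j$ gives a summable series. The third term has conditional expectation $\beta_j^{1/2}\sigma(x_j\mid D_{(j)})$, because $\mathbb{E}[f(x_j)\mid D_{(j)}]=\mu(x_j\mid D_{(j)})$ and $x_j$ is $D_{(j)}$-measurable (UCB is deterministic given the data). Hence
\begin{align}
\text{BSR}(n)\lesssim \frac{1}{n}\Bigl(C+\beta_n^{1/2}\,\mathbb{E}\textstyle\sum_{j\le n}\sigma(x_j\mid D_{(j)})\Bigr).
\end{align}

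The main obstacle is the term $\sum_j \sigma(x_j\mid D_{(j)})$. The posterior is conditioned on $D_{(j)}$ and not on the full history $D_{j-1}$, so the standard information-gain argument, $\sum_j\sigma^2(x_j\mid D_{j-1})\lesssim \Psi_n$, does not apply directly. Here we use the device from \citet{kandasamy2018parallelised} (following Desautels et al.): bound the variance ratio
\begin{align}
\sigma(x\mid D_{(j)})\le \xi_q^{1/2}\,\sigma(x\mid D_{j-1}),
\end{align}
where $\xi_q$ controls the information gain obtainable from $q-1$ hallucinated points. This is the $\xi_q$ in the statement. We would use the bound as an assumption or definition of $\xi_q$, exactly as in the TS analysis. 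Since the bound depends only on the posterior variance, which is independent of the unobserved $\mathbf{y}_b$, it transfers verbatim to UCB.

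Finally, apply Cauchy--Schwarz: $\sum_j\sigma(x_j\mid D_{j-1})\le\sqrt{n\sum_j\sigma^2(x_j\mid D_{j-1})}\lesssim\sqrt{n\Psi_n}$, using the maximum information gain $\Psi_n$. Combining the steps yields $\text{BSR}(n)\lesssim \frac{1}{n}\sqrt{\log n}\,\sqrt{\xi_q n\Psi_n}=\sqrt{\xi_q\log(n)\Psi_n/n}$, as claimed.
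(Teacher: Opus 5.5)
Your route matches the paper's proof. You reduce $\mathrm{BSR}(n)$ to $\frac{1}{n}\mathrm{BCR}(n)$ and use that $x_j$ maximizes $U_j$. You note that $\mathbb{E}[U_j(x_j)-f(x_j)\mid D_{(j)}]=\sqrt{\beta_j}\,\sigma(x_j\mid D_{(j)})$, because UCB is deterministic given the data. Then you apply the variance-ratio bound $\sigma(x\mid D_{(j)})\le\xi_q^{1/2}\sigma(x\mid D_{j-1})$ (Lemma 10 of \citet{kandasamy2018parallelised}), Cauchy--Schwarz, and the information-gain bound.

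One step fails as written: the claim that $\mathbb{E}[f(x^*)-U_j(x^*)]=O(1/j^2)$ follows from a Gaussian tail bound and a union bound over $\mathcal{X}_j$. The maximizer $x^*$ is a random point of the continuum that depends on $f$. So $f(x^*)=\max_x f(x)$ is not Gaussian given $D_{(j)}$. Moreover, a union bound over a finite grid controls nothing at $x^*$, which generally lies off the grid. If you keep $U_j(x^*)$ in your decomposition, you must also bound $U_j([x^*]_j)-U_j(x^*)$. That means controlling the smoothness of the data-dependent $\mu(\cdot\mid D_{(j)})$ and $\sigma(\cdot\mid D_{(j)})$, which you do not address.

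The paper, like \citet{srinivas2010gaussian} in the continuous case, avoids this by putting the nearest grid point $[x^*]_j$ inside the comparison:
\[
f(x^*)-f(x_j)=\bigl[f(x^*)-f([x^*]_j)\bigr]+\bigl[f([x^*]_j)-U_j([x^*]_j)\bigr]+\bigl[U_j([x^*]_j)-f(x_j)\bigr].
\]
The first bracket has expectation at most $1/(2j^2)$ by sample-path smoothness of $f$. This is where the kernel assumption is needed (Assumption 8 and Lemma 12 of \citet{kandasamy2018parallelised}). The second bracket is handled by exactly your Gaussian-tail and union-bound argument over the data-independent grid (Lemma 13). The third bracket is at most $U_j(x_j)-f(x_j)$, because $x_j$ maximizes $U_j$ over all of $\mathcal{X}$, which contains the grid. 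With this correction, the rest of your argument goes through unchanged and gives the paper's bound.
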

\begin{proof}
    A formal statement and proof are given in \Cref{sec:app_bound}.
\end{proof}

The bound is given in terms of the maximum information gain (MIG), $\Psi_n$, after $n$ evaluations, and the quantity $\xi_q$. Under an RBF kernel, the MIG grows as $\Psi_n \propto \log(n)^{d+1}$ \citep{srinivas2010gaussian}, i.e., \mbox{sub-linearly}. Intuitively, $\xi_q$ represents the cost of operating asynchronously, quantifying the missing information in the $q-1$ busy locations, and is bounded independently of $n$. See \Cref{sec:app_bound} for formal definitions of both $\Psi_n$ and $\xi_q$.

By the same reasoning as in \citet{kandasamy2018parallelised}, this provides strong theoretical motivation for why the standard UCB should work well in the asynchronous setting. Namely, its BSR$(n)$ admits an upper bound of the same asymptotic order as that of the optimally informed sequential TS. In particular, this shows that good performance can be theoretically guaranteed, even when neither accounting for busy locations nor injecting randomness. Further intricacies, including the bounds presented by \citet{kandasamy2018parallelised} are discussed in \Cref{sec:app_bound}.

\subsection{The expected UCB is the Kriging Believer}
\label{sec:margin}

Under the GP surrogate, the unknown function values,  $\mathbf{y}_b = \{y_j\}_{j=1}^{q-1}$, at the busy locations, $\mathcal{B} = \{x_j\}_{j=1}^{q-1}$,  render the acquisition function, $\alpha(x \mid D_n,  (\mathcal{B}, \mathbf{y}_b))$, a random function. As \citet{ginsbourger2011dealing} point out, it is not clear how to deal with this randomness, since the optimization of a random function is not a well-defined task. The proposed solution is to optimize the expected acquisition function, in particular the EI. Due to the intractability of this expectation, this is estimated from \textit{Monte Carlo} samples (see \Cref{sec:app_incumb}).

 Using a different approach, \textit{penalization}-based methods (LP and LLP) by \citet{alvi2019asynchronous} attempt to approximate the expected UCB as
\begin{align}
    \alpha_{LP}(x\mid \mathcal{B}) &\approx \mathbb{E}[\alpha_{UCB}(x \mid D_n,  D_b)\mid D_n, \mathcal{B}] \label{equ:expect}\\
    &= \int \alpha_{UCB}(x \mid D_n,  D_b) \ p(\mathbf{y}_b\mid D_n) d\mathbf{y}_b. \label{equ:integ}
\end{align}

Unlike for the EI, we show that for the UCB, this integral is in fact analytically tractable and leads to a well-known heuristic.

\begin{proposition}[Expected UCB is the Kriging Believer]
\label{prop:ucb_KB}
Consider the random Upper Confidence Bound, $\alpha_{UCB}(x|D_n, (\mathcal{B}, \mathbf{y}_b))$, of the GP surrogate posterior, with unobserved function values, $\mathbf{y}_b$, at known busy locations, $\mathcal{B}$. Then it holds that
\begin{equation}
\label{equ:prop}
\begin{split}
\mathbb{E}[\alpha_{UCB}(x \mid D_n, (\mathcal{B}, \mathbf{y}_b)) \mid D_n, \mathcal{B}] \\
= \alpha_{UCB}(x \mid D_n, (\mathcal{B}, \mu_b))
\end{split}
\end{equation}
with $\mu_b$ the GP posterior mean at $\mathcal{B}$.
\end{proposition}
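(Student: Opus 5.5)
The proof rests on two standard facts about Gaussian process posteriors. Write the UCB as $\alpha_{UCB}(x \mid D) = \mu(x \mid D) + \sqrt{\beta}\,\sigma(x \mid D)$ for a fixed $\beta \ge 0$.

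The first fact is that the posterior variance in \eqref{equ:post_var} depends only on the input locations, not on the observed outputs. So $\sigma(x \mid D_n, (\mathcal{B}, \mathbf{y}_b))$ is a deterministic function of $X \cup \mathcal{B}$ and passes through the conditional expectation unchanged. It also equals $\sigma(x \mid D_n, (\mathcal{B}, \mu_b))$, because replacing $\mathbf{y}_b$ by $\mu_b$ does not change the inputs.

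The second fact is that the augmented posterior mean is affine in $\mathbf{y}_b$. Applying \eqref{equ:post_mean} to the augmented data set with inputs $\tilde X = (X, \mathcal{B})$ and outputs $(\mathbf{y}, \mathbf{y}_b)$ gives
\begin{equation*}
\mu(x \mid D_n, (\mathcal{B}, \mathbf{y}_b)) = k_\phi(x, \tilde X) M_{\tilde X \tilde X} \begin{pmatrix} \mathbf{y} \\ \mathbf{y}_b \end{pmatrix},
\end{equation*}
where $M_{\tilde X \tilde X} = (K_{\tilde X \tilde X} + \eta^2 \mathbf{I})^{-1}$ and the row vector $k_\phi(x, \tilde X) M_{\tilde X \tilde X}$ depends only on $x$ and the inputs. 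Conditional on $D_n$ and $\mathcal{B}$, both $\mathbf{y}$ and this row vector are fixed. Linearity of expectation then gives
\begin{equation*}
\mathbb{E}[\mu(x \mid D_n, (\mathcal{B}, \mathbf{y}_b)) \mid D_n, \mathcal{B}] = k_\phi(x, \tilde X) M_{\tilde X \tilde X} \begin{pmatrix} \mathbf{y} \\ \mathbb{E}[\mathbf{y}_b \mid D_n] \end{pmatrix}.
\end{equation*}
Adding the deterministic $\sqrt{\beta}\,\sigma$ term back and comparing with $\alpha_{UCB}(x \mid D_n, (\mathcal{B}, \mu_b))$ shows that \eqref{equ:prop} holds provided $\mathbb{E}[\mathbf{y}_b \mid D_n] = \mu_b$.

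The only point needing care is this identification of $\mathbb{E}[\mathbf{y}_b \mid D_n]$. Since $\mathbf{y}_b$ are noisy observations $f(x_j) + \epsilon_j$ with zero-mean noise independent of $D_n$, the conditional mean is the GP posterior mean of $f$ at $\mathcal{B}$. That is exactly $\mu_b = \mu(\mathcal{B} \mid D_n)$ from \eqref{equ:post_mean}.

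The conditional covariance of $\mathbf{y}_b$ does differ between the noisy and noiseless interpretations, since it includes $\eta^2 \mathbf{I}$ in the noisy case. This does not matter: only the first moment enters, because the mean is affine in $\mathbf{y}_b$ and the variance does not depend on $\mathbf{y}_b$ at all. The same argument applies if the busy values are treated as noiseless $f(x_j)$, with a matching change of $M$ for the believer.

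Finally, the expectation is finite because $\mathbf{y}_b \mid D_n$ is Gaussian. This justifies exchanging the integral in \eqref{equ:integ} with the linear map.
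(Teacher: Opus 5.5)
Your proposal is correct and follows the paper's proof: the posterior standard deviation depends only on the inputs $X\cup\mathcal{B}$ and so passes through the expectation unchanged, while the augmented posterior mean is linear in $\mathbf{y}_b$, so taking the expectation amounts to replacing $\mathbf{y}_b$ by $\mathbb{E}[\mathbf{y}_b \mid D_n]=\mu_b$. Your extra remarks on the noise model for $\mathbf{y}_b$ (only the first moment enters) and on integrability are sound refinements that the paper leaves implicit.
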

\begin{proof}
    The proof is given in \Cref{sec:app_proof}.
\end{proof}
We use the fact that $\mu(\cdot \mid D_n, D_b)$ is linear in $\mathbf{y}_b$ and that $\sigma(\cdot\mid D_n,  D_b)$ only depends on the known input locations, $X$ and $\mathcal{B}$, but not on the unknown function values.

From the above result, it can be seen that marginalizing over the values at busy locations gives the UCB we would get by simply assuming the posterior mean, $\mu_b$, at $\mathcal{B}$. This is a \textit{hallucination}-based heuristic well known as the Kriging Believer \citep{ginsbourger2010kriging}. On the other hand, \Cref{prop:ucb_KB} then gives an additional insight into the Kriging Believer. We have now shown that, in the case of the UCB, the Kriging Believer is the expected acquisition function.

We have thus shown that the UCB \textit{penalization}-based methods, as well as the Kriging Believer, are constructed from the expected acquisition function, not taking into account higher-order moments of the random acquisition function $\alpha(x \mid D_n,  (\mathcal{B}, \mathbf{y}_b))$. This provides a theoretical underpinning for the poor performance of these methods \citep{alvi2019asynchronous,de2021asynchronous}, which we further demonstrate in \Cref{sec:emp_tasks}.

\begin{figure}[t]
    \centering

    \includegraphics{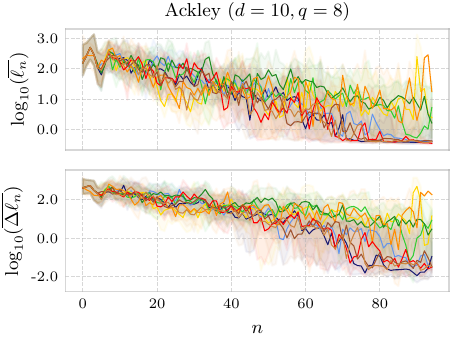}

    \includegraphics{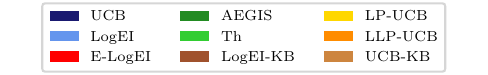}

    \caption{
    \textit{Top:} The average ARD-RBF kernel lengthscale decreases by about two orders of magnitude from initialization to a plateau for standard methods. Non-standard methods do not all reach this plateau. \textit{Bottom:} The mean absolute difference in subsequent lengthscales drops significantly from initialization to convergence for standard acquisition functions. Several non-standard methods query locations leading to non-converging kernel lengthscales. See \Cref{sec:hyp} for formal definitions of $\overline{\ell_n}$ and $\overline{\Delta\ell_n}$.} 
    \label{fig:hpy}
\end{figure}

\section{CONCEPTUAL ARGUMENTS}
\label{sec:concept}
After presenting some theoretical motivation for why standard acquisition can work well and other heuristics may not, we turn to a more conceptual investigation of \Cref{hyp:claim}. 

\subsection{Each new query is selected under a different---and more informed---GP posterior}
\label{sec:observation}

As described in line 1 of  \Cref{alg:asyncBO}, the $q$ workers are initialized at quasi-random locations (e.g., via Halton sequence \citep{halton1964algorithm}) in the search space, $\mathcal{X}$, guaranteeing initial diversity. Once a worker finishes its function evaluation, $y_{n+1}$, at $x_{n+1}$, the surrogate is updated and a new acquisition is made. This new acquisition is informed by the newly arrived datum $(x_{n+1}, y_{n+1})$, as well as all data collected by all workers prior, $D_{n+1}~=~D_n \cup (x_{n+1}, y_{n+1})$, and selected by
\begin{align}
    x' = \arg\max\limits_{x \in \mathcal{X}} \ \alpha(x\mid D_{n+1}).
\end{align}

\begin{observation}[\Cref{hyp:claim} neglects GP surrogate update]
\label{obs:update}
    \Cref{hyp:claim} states that the new input location, $x'$, will either be redundant or even repeated. But this fails to account for the update to the surrogate likelihood via the new datum, $(x_{n+1}, y_{n+1})$, as well as to the surrogate kernel hyperparameters, $\phi$. In fact, existing works presented in \Cref{sec:app_incumb} never discuss the intermediate update of the surrogate at all.
\end{observation}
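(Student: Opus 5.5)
The observation is a claim about the \emph{reasoning} behind \Cref{hyp:claim}, not a quantitative bound. So I would argue that the inference ``busy locations are ignored $\Rightarrow$ queries are repeated or redundant'' is a non sequitur, by making the surrogate update explicit. The hypothesis implicitly treats the acquisition function as fixed across consecutive asynchronous steps. If $\alpha(\cdot \mid D_{n+1}) \equiv \alpha(\cdot \mid D_n)$, then maximizing it again would indeed return (nearly) the same maximizer as the previous query, which is still busy. The plan is to show that this premise is false in general. I would treat the update in two parts: the likelihood (data) part and the hyperparameter part.

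\textbf{Step 1 (likelihood update).} I would write the posterior under $D_{n+1} = D_n \cup \{(x_{n+1}, y_{n+1})\}$ via the standard rank-one update of \eqref{equ:post_mean} and \eqref{equ:post_var}. With $k_n(x,x') $ the posterior covariance under $D_n$, this gives
\begin{align}
\mu(x \mid D_{n+1}) &= \mu(x \mid D_n) + \frac{k_n(x, x_{n+1})}{\sigma^2(x_{n+1}\mid D_n) + \eta^2}\bigl(y_{n+1} - \mu(x_{n+1}\mid D_n)\bigr), \\
\sigma^2(x \mid D_{n+1}) &= \sigma^2(x\mid D_n) - \frac{k_n(x, x_{n+1})^2}{\sigma^2(x_{n+1}\mid D_n) + \eta^2}.
\end{align}
From these formulas I would read off two facts.
\begin{enumerate}[label=(\roman*)]
\item Unless the residual $y_{n+1} - \mu(x_{n+1}\mid D_n)$ vanishes, which is a probability-zero event under the Gaussian predictive, the mean changes on the whole region where $k_n(\cdot, x_{n+1}) \neq 0$.
\item The variance strictly decreases wherever $k_n(\cdot, x_{n+1}) \neq 0$.
\end{enumerate}
Consequently, for UCB, $\alpha_{UCB}(x \mid D_{n+1}) - \alpha_{UCB}(x \mid D_n)$ is almost surely a nonzero function. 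In particular, a maximizer of $\alpha(\cdot \mid D_n)$, which is precisely a point that may currently sit in $\mathcal{B}$, need not remain a maximizer of $\alpha(\cdot\mid D_{n+1})$. This follows the same way the sequential argmax moves between steps; LogEI is handled identically through its dependence on $(\mu,\sigma)$ and the incumbent.

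\textbf{Step 2 (hyperparameter update).} Line 11 of \Cref{alg:asyncBO} refits $\phi$ by marginal likelihood on $D_{n+1}$. I would note that the optimizer of the marginal likelihood generically changes when a datum is added. Through $k_\phi$, this change alters every entry of $M_{XX}$, hence $\mu$ and $\sigma$ everywhere. This is a second, global source of change in the acquisition that the hypothesis does not account for. It is the effect visible in the lengthscale dynamics of \Cref{fig:hpy}.

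\textbf{Step 3 (literature claim).} The last sentence of the observation is a claim about prior work. I would support it by going through the methods summarized in \Cref{sec:prelim}: Monte Carlo/E-LogEI, KB, TS/AEGIS, and LP/LLP, as detailed in \Cref{sec:app_incumb}. For each, I would check that its motivation for diversity is stated relative to busy locations only, with no discussion of how the intermediate refit on $D_{n+1}$ already changes the acquisition landscape.

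The main obstacle is that ``redundant'' is informal. One cannot prove that the new maximizer $x'$ is never close to a busy point, and indeed \Cref{fig:2d} shows occasional close queries. I would therefore deliberately limit the claim: the argument only establishes that the hypothesis's conclusion does not follow from its premise, because the premise ignores the update. Whether the resulting queries are actually diverse enough would be left to \Cref{thm:bound} and to the empirical distance analysis.
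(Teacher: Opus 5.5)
Your proposal is correct for what the observation actually asserts, and it follows the paper's decomposition: a likelihood update from the new datum, a refit of $\phi$, and a literature claim that, as in the paper, can only rest on the survey in \Cref{sec:app_incumb}. Where you differ is in how the likelihood update is made to bear on \Cref{hyp:claim}.

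You formalize it with the rank-one update and conclude that $\alpha(\cdot\mid D_{n+1})-\alpha(\cdot\mid D_n)$ is almost surely nonzero. The paper instead argues qualitatively in \Cref{sec:observation,sec:hyp}, backed by \Cref{fig:iter,fig:hpy,fig:dist}, and assigns the two updates to different phases. The global lengthscale update shifts the acquisition optimum during exploration. The local likelihood update only needs to move the optimum locally during exploitation, when queries should cluster anyway.

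Your own formulas show why that split matters. The change is weighted by $k_n(\cdot,x_{n+1})$, so if the completed point is far from the busy maximizer, the perturbation there is negligible and the argmax barely moves. Likewise, a maximizer sitting at a vertex of $[0,1]^d$, with gradient pointing strictly outward in every coordinate, is not moved at all by a small enough perturbation. Hence ``almost surely nonzero change'' refutes only the premise that the acquisition is literally unchanged. It does not rule out redundancy, or even exact repetition.

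Your explicit restriction of the claim keeps the argument honest. However, it is the paper's phase-based reading that links the update to the ``redundant'' part of \Cref{hyp:claim}. One small point in your favor: older busy points were selected under some $D_m$ with $m<n$, so comparing with $D_n$ is the least favorable case for your argument.
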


\paragraph{Building intuition}
The stated goal of introducing purpose-built acquisition functions for asynchronous BO was to prevent redundant queries.
\Cref{obs:update} suggests that the problem of querying the same, or very similar, locations in asynchronous BO does not trivially arise from using standard acquisition functions.
Showing one asynchronous BO step for the UCB, the locally \textit{penalized} UCB (LP-UCB) and the \textit{hallucinating} Kriging Beliver UCB, \Cref{fig:iter} illustrates that the intermediate update can, at least in principle, be effective in guiding the subsequent acquisition. Without a need to explicitly enforce diversity, the new datum shifts the optimum of the UCB into a promising region, far from previous and currently busy queries. Importantly, the purpose-built methods taking into account the busy locations in rather elaborate ways end up querying the same point as the standard UCB. This effect persists in the following iterations, as can be seen in \Cref{fig:app_iter}.

Moreover, from \Cref{obs:update}, methods explicitly enforcing diversity in queries could be expected to suffer from an inefficient over-exploration of the search space. This is because, by preventing exploitation, the algorithm is forced to query less promising areas, which should have already been eliminated from consideration. We illustrate this in \Cref{fig:2d}, which shows the sample trajectories for three variants of the UCB heuristic. The optimally informed sequential UCB recovers the optimum within few iterations. An over-exploration of the search space by the \textit{penalization}-based method LP-UCB \citep{alvi2019asynchronous} prevents it from reaching the global optimum within the iteration budget. While the asynchronous UCB occasionally queries close to previous locations, against which there is no theoretical guarantee, it offers a better exploration-exploitation trade-off and thus reaches close to the global optimum.

\label{sec:synth_exper}
\begin{figure*}[t]
    \centering

    \includegraphics{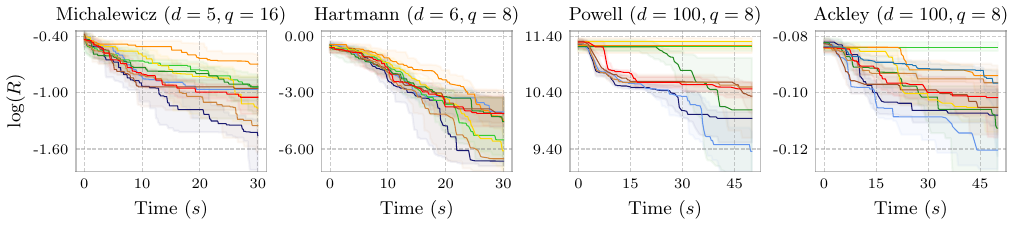}
    \includegraphics{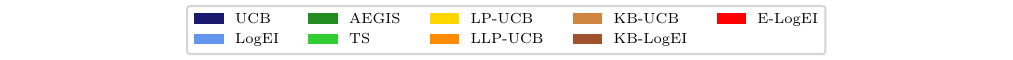}

    \caption{\textbf{Standard acquisition dominates on synthetic tasks.} This observation is robust to the dimensionality and the number of workers, whereas the performance of the purpose-built methods is always worse and more sensitive to task parameters (see also \Cref{fig:app_reg,fig:app_more_t}).}
    \label{fig:reg_synth}
\end{figure*}

\begin{figure*}[t]
    \centering

    \includegraphics{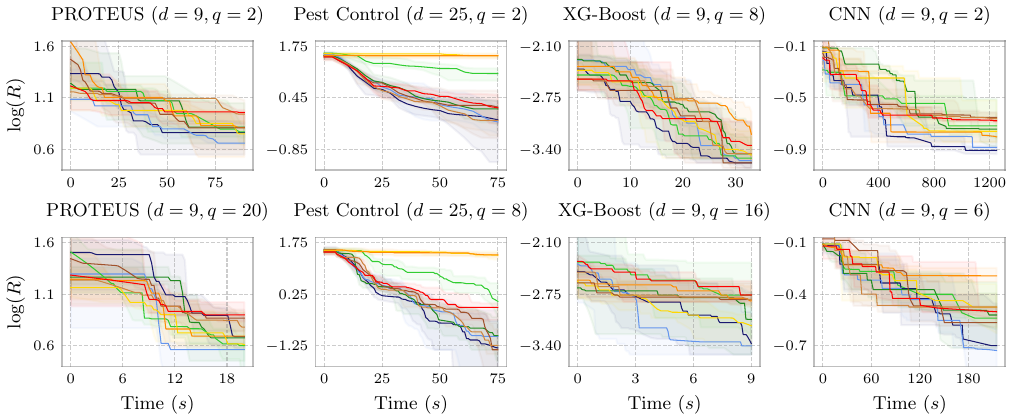}
    \includegraphics{figs/legend2.pdf}

    \caption{\textbf{Standard acquisition performs well on real-world and hyperparameter tuning tasks and benefits more from an increase in the number of workers than purpose built methods.} Both rows show the same optimization tasks, but with different numbers of workers, $q$. From the axes it can be seen how an increase in evaluation resources speeds up the optimization. Importantly, standard acquisition often benefits disproportionately from more workers, compared to existing methods.}
    \label{fig:reg_real}
\end{figure*}

\subsection{The kernel and likelihood updates naturally facilitate a good exploration-exploitation trade-off}
\label{sec:hyp}

Good results in BO require good exploration-exploitation trade-offs \cite{srinivas2010gaussian}. The concern of \cref{hyp:claim} is that a new observation will not produce a sufficiently large update to encourage exploration. We now disentangle the respective contribution to the exploration-exploitation trade-off of the two distinct updates presented in \Cref{obs:update}. To this end, we define
\begin{align}
    \overline{\ell_n} \equiv \frac{1}{d} \sum_{j = 1}^d \ell_{n,j} 
\end{align}   
and
\begin{align}
    \overline{\Delta \ell_n} \equiv \frac{1}{d} \sum_{j = 1}^d |\ell_{n,j} - \ell_{n-1, j}|,
\end{align}
where $\ell_{n, j} > 0$ is the lengthscale corresponding to dimension $j \in [d]$ of an ARD kernel, $k_{\phi}(\cdot, \cdot)$, as introduced in \Cref{sec:prelim}, at iteration $n$ of \Cref{alg:asyncBO}. While $\overline{\ell_n}$ is a measure for the covariance between function values under the GP-surrogate averaged over all $d$ axes, $\overline{\Delta \ell_n}$ reflects the change in this covariance compared to the previous iteration.

\paragraph{Exploration phase and kernel update} The updated kernel lengthscales, $\ell_{n+1} \in \mathbb{R}^d$, in particular, will affect the shape of the surrogate posterior and ultimately the acquisition. The lengthscales are updated together with the other kernel hyperparameters from $\phi_n$ to $\phi_{n+1}$ to fit the updated data, $D_{n+1}$ (line 11 in \Cref{alg:asyncBO}). This is done by optimizing the unnormalized posterior, $p(\phi| \mathbf{y}_{n+1})$, as
\begin{align}
    \phi_{n+1} &= \arg \max\limits_{\phi} \ \log p(\mathbf{y}_{n+1}\mid \phi) + \log p(\phi),
\end{align}
in a gradient-based manner. 

In \Cref{fig:hpy}, we find empirically that the magnitude of this update is largest in the early stages of the optimization and essentially vanishes towards the end. Intuitively, in the exploration phase with little data, the new datum will carry significant information on the lengthscales. Being a global property of the surrogate, the lengthscales can shift the acquisition function optimum, even if the new datum is in an unpromising region, thereby facilitating exploration of the search space.

\paragraph{Exploitation phase and likelihood update} For most common kernels, the nature of the likelihood update is local, forcing the GP posterior mean, $\mu(\cdot|D_{n+1})$, to pass close by the new datum. That is, $y_{n+1} \approx \mu(x_{n+1}|D_{n+1})$, depending on the observation noise level, $\eta^2$. Also, the uncertainty, expressed via the posterior variance $\sigma^2(\cdot | D_{n+1})$, collapses to at most the observation noise, $\eta^2$, at the input $x_{n+1}$, reflecting the gain in information at and around this location. Note that this collapse occurs even if $y_{n+1}$ already lies on the posterior mean, i.e., $y_{n+1} = \mu(x_{n+1}\mid D_n)$. This local update changes the acquisition function around $(x_{n+1}, y_{n+1})$. 

In the exploitation phase, kernel lengthscales have effectively converged for standard acquisition methods (\Cref{fig:hpy}). Thus, we can no longer rely on the kernel update later in the optimization, but we argue that this may not even be necessary. In the exploitation phase, queries \textit{should} lie close together, and the acquisition function optimum only needs to be shifted locally in each update. As argued above, this local shift is exactly what the likelihood update can provide.

\section{EXPERIMENTAL EVALUATION}
\label{sec:empirical}

We support our theoretical and conceptual insights from \Cref{sec:theoretical,sec:concept} with experiments on a range of synthetic, real-world and hyperparameter tuning tasks. After demonstrating excellent performance of standard acquisition functions (\Cref{sec:emp_tasks}), we investigate the exploration-exploitation trade-off made by both standard and purpose-built acquisition functions (\Cref{sec:query_dists}).

\subsection{Optimization tasks}
\label{sec:emp_tasks}

Following common practice, we report the logarithm of the simple regret, $R$, $\log(R) = \log \left| f^* - y_n^* \right|$, where $f^*$ is the known function optimum. We plot the median of the $\log(R)$ together with the inter-quartile range. For details on the implementation\footnote{We make our code available at \url{https://github.com/fortuinlab/SAIS-BO}} and tasks, please refer to \Cref{sec:app_impl,sec:app_exper}.

\paragraph{Synthetic test functions}

The acquisition rules are compared on synthetic test functions across a range of dimensions and numbers of workers. All experiments are repeated $20$ times with different initializations. In order to simulate the asynchronous setting, the evaluation time is sampled from a half-normal distribution with scale parameter $\theta = \sqrt{\nicefrac{\pi}{2}}$ \citep{alvi2019asynchronous,de2021asynchronous}. We choose functions used in works outlined in \Cref{sec:app_incumb}, to enable comparison of results.

In \Cref{fig:reg_synth}, it can be seen that no purpose-built method outperforms both the standard UCB and LogEI. While the KB-UCB does almost as well as the UCB on the $5d$–Michalewicz and $6d$–Hartmann functions, the \mbox{LogEI} is superior on the very high–dimensional Powell and Ackley functions. The \textit{penalization}-based methods, as well as TS, are unable to make meaningful progress in this vast search space.  A large number of additional results on more test functions and different configurations $(d,q)$, and with larger time budget can be found in \Cref{sec:app_results}.

\paragraph{Real-world tasks} The PROTEUS and Pest Control tasks showcase the effectiveness on problems beyond standard synthetic functions. In PROTEUS \citep{Lichtenberg_2021_JGRP}, a physics-based simulator for the evolution of planets, the objective is to retrieve the initial condition given a simulated planet, while Pest Control requires making a decision on herbicide at five different stations. See \Cref{sec:real_tasks} for a more detailed description.

Importantly, the PROTEUS experiment demonstrates how our findings continue to hold true, even in the very high worker setting of $q=20$ (see \Cref{fig:reg_real}). This is difficult to reconcile with \cref{hyp:claim}, where one might expect query redundancy to be exacerbated by a large number of workers. But, \Cref{fig:reg_real} reveals that this is not the case for either of these real-world experiments. 

\paragraph{Hyperparameter tuning tasks}
\label{sec:hpy_tasks}

To demonstrate the robustness of our findings, experiments are performed on two hyperparameter tuning tasks. Due to the large computational cost of these experiments (which is precisely what makes them relevant in this work on BO), they are repeated with $9$ different initializations. The tasks are briefly outlined below, and the results are shown in \Cref{fig:reg_real}. In \Cref{sec:real_tasks} we discuss the hyperparameter tuning tasks in more detail.

For XG-Boost, we consider nine hyperparameters, which we tune to optimize the 5-fold cross-validation accuracy. This score is computed from the classification performance on the UCI Breast Cancer data \citep{wolberg1993breast}. We also learn the hyperparameters of a 6-layer CNN pipeline to optimize validation accuracy, after 20 epochs of training on CIFAR10 \citep{krizhevsky2009learning}.

 Contrary to \Cref{hyp:claim}, we see in \Cref{fig:reg_real} that standard methods are able to effectively utilize parallel workers to accelerate the optimization, without the need for explicit diversity. The LogEI does extremely well on both tasks, while the UCB effectively matches this performance on the CNN task. A key insight here is that standard acquisition functions benefit disproportionately from an increase in the number of workers, $q$, compared to existing methods.

\subsection{Analysis of distances of queries from busy locations}
\label{sec:query_dists}

\begin{figure}[t]
    \centering

    \includegraphics{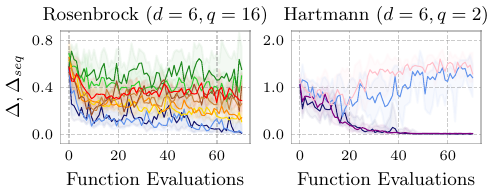}

    \includegraphics{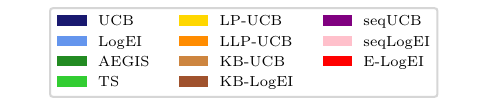}

    \caption{\textbf{Standard acquisition does not systematically repeat queries and performs an exploration-exploitation trade-off similar to that of its optimally informed sequential counterpart.} It can be seen that standard acquisition methods do not repeat queries, and that standard acquisitions query at similar distances to previous queries in the asynchronous setting as in the sequential setting.}
    \label{fig:dist}
\end{figure}

We now turn to an empirical investigation of our conceptual analysis in \Cref{sec:observation}. \Cref{hyp:claim} claims that in the absence of regularization, query distances to busy locations will be small or even zero. To quantify this behavior, we consider the distances of asynchronous queries to currently busy locations. As a gold standard, we also introduce the distances of the optimally informed sequential BO queries to the $q-1$ most recently sampled points. This allows for comparison of the exploration-exploitation trade-off made by all methods considered. Formally, we define the distance, $\Delta$, of the asynchronous query, $x'$, from busy locations, $\mathcal{B}$, as
\begin{align}
    \Delta \equiv \min\limits_{x_j \in \mathcal{B}} \lVert x' - x_j \rVert_2,
\end{align}
and analogously the distance of the $n^{\text{th}}$ query in sequential BO to the last $q-1$ sequential queries as
\begin{align}
    \Delta_{seq} \equiv \min\limits_{i \in [q-1]} \lVert x_{n} - x_{n-i} \rVert_2.
\end{align}

In \Cref{fig:dist}, we show the median and inter-quartile range of these distances for a number of test functions, across 20 independent runs. It can be seen that existing methods query further from busy locations than standard acquisition, as is intended. But, contrary to \Cref{hyp:claim}, standard acquisition does not systematically perform repeated queries from the very start of the optimization. In fact, the standard UCB and, in some cases, also LogEI, exhibit the desirable transition from an initial exploratory phase ($\Delta$~large) to an exploitation stage ($\Delta$~small).  

Perhaps even more surprising is the comparison of standard acquisition functions to their respective sequential counterparts. \Cref{fig:dist} shows that standard acquisition in the asynchronous setting very closely matches the query distances of the sequential one. Interestingly, this analysis reveals that the lack of an exploitation phase of LogEI on, e.g., Hartmann is not due to issues described in \Cref{hyp:claim}, but simply a characteristic of the LogEI itself. 

This analysis further supports the conceptual arguments made in \Cref{sec:concept}. Existing purpose-built methods enforcing diversity seem to suffer from over-exploration, subsequently leading to a weaker exploitation phase of the optimization. On the other hand, standard acquisition performs an exploration-exploitation trade-off closely aligned with that of the optimally informed sequential BO.

\section{CONCLUSION}
\label{sec:concl}

A recent line of research falls under the overarching theme: \textit{BO is easier than previously thought and simple approaches work surprisingly well} \citep{hvarfner2024vanilla,xu2024standard,ament2023unexpected}. In this work, we add to this by demonstrating that simple and standard acquisition functions perform well in asynchronous BO, while existing methods for asynchronous BO are founded on the unsupported assumption in \Cref{hyp:claim}. 

While the literature has not presented a purpose-built method superior to standard acquisition in asynchronous BO, this does not mean such a method cannot exist. 
However, we have shown that existing methods in fact often degrade performance by enforcing query-diversity, likely by inducing an inefficient over-exploration of the search space. 

While further exploration may often be needed to find the global optimum, it is currently unclear how to conduct this efficiently. 
In future research, we plan to further explore mechanisms taking into account the busy locations to achieve efficient exploration in the asynchronous BO algorithm. 

\clearpage

\subsubsection*{Acknowledgements}
We thank Tommy Rochussen for helpful discussions.
VF was supported by the Branco Weiss Fellowship.

\bibliographystyle{plainnat}  %
\bibliography{references} 

\clearpage
\section*{Checklist}

\begin{enumerate}

  \item For all models and algorithms presented, check if you include:
  \begin{enumerate}
    \item A clear description of the mathematical setting, assumptions, algorithm, and/or model. Yes. See \Cref{alg:asyncBO} and \Cref{sec:app_acqf}.
    \item An analysis of the properties and complexity (time, space, sample size) of any algorithm. Yes. See \Cref{sec:theoretical}.
    \item (Optional) Anonymized source code, with specification of all dependencies, including external libraries. Yes. \url{https://github.com/fortuinlab/SAIS-BO}
  \end{enumerate}

  \item For any theoretical claim, check if you include:
  \begin{enumerate}
    \item Statements of the full set of assumptions of all theoretical results. Yes. See \Cref{sec:app_bound}.
    \item Complete proofs of all theoretical results. Yes. See \Cref{sec:app_bound}.
    \item Clear explanations of any assumptions. Yes. See \Cref{sec:app_bound}.    
  \end{enumerate}

  \item For all figures and tables that present empirical results, check if you include:
  \begin{enumerate}
    \item The code, data, and instructions needed to reproduce the main experimental results (either in the supplemental material or as a URL). Yes. \url{https://github.com/fortuinlab/SAIS-BO}
    \item All the training details (e.g., data splits, hyperparameters, how they were chosen). Yes. See \Cref{sec:app_impl}.
    \item A clear definition of the specific measure or statistics and error bars (e.g., with respect to the random seed after running experiments multiple times). Yes. See \Cref{sec:empirical}.
    \item A description of the computing infrastructure used. (e.g., type of GPUs, internal cluster, or cloud provider). Yes. See \Cref{sec:app_comp}.
  \end{enumerate}

  \item If you are using existing assets (e.g., code, data, models) or curating/releasing new assets, check if you include:
  \begin{enumerate}
    \item Citations of the creator If your work uses existing assets. Yes. See \Cref{sec:app_impl,sec:app_exper}.
    \item The license information of the assets, if applicable. Yes. See \Cref{sec:app_impl,sec:app_exper} and \url{https://github.com/fortuinlab/SAIS-BO}.
    \item New assets either in the supplemental material or as a URL, if applicable. Yes. \url{https://github.com/fortuinlab/SAIS-BO}
    \item Information about consent from data providers/curators. Not Applicable.
    \item Discussion of sensible content if applicable, e.g., personally identifiable information or offensive content. Not Applicable.
  \end{enumerate}

  \item If you used crowdsourcing or conducted research with human subjects, check if you include:
  \begin{enumerate}
    \item The full text of instructions given to participants and screenshots. Not Applicable.
    \item Descriptions of potential participant risks, with links to Institutional Review Board (IRB) approvals if applicable. Not Applicable.
    \item The estimated hourly wage paid to participants and the total amount spent on participant compensation. Not Applicable.
  \end{enumerate}

\end{enumerate}

\clearpage
\appendix
\thispagestyle{empty}

\onecolumn

\counterwithin{figure}{section}
\counterwithin{table}{section}
\renewcommand{\thefigure}{\thesection.\arabic{figure}}
\renewcommand{\thetable}{\thesection.\arabic{table}}

\section{Acquisition function details}
\label[appendix]{sec:app_acqf}

The probabilistic nature of the surrogate model allows the acquisition function to reason about uncertainty in $f(\cdot)$. This, in turn, guides the tradeoff of exploration and exploitation on the search space $\mathcal{X}$. Formally, the next query location, $x'$, is proposed by an acquisition rule, with acquisition function $\alpha: \mathcal{X} \mapsto \mathbb{R}$, as
\begin{align}
    x' = \arg\max\limits_{x \in \mathcal{X}} \ \alpha(x\mid  D_n).
\end{align}
We drop the conditioning on $D_n$ for notational convenience, unless explicitly mentioning it is necessary. This section presents a number of standard and asynchronous acquisition rules relevant to our work.

\subsection{Standard acquisition}
\label[appendix]{sec:std_acqf}

\paragraph{Upper Confidence Bound} The GP Upper Confidence Bound (UCB) \citep{srinivas2010gaussian} is a simple heuristic based on optimizing a quantile of the credibility interval, for example, the 95\% outcome. It is defined as
\begin{align}
    \alpha_{UCB}(x) = \mu(x\mid D_n) + \sqrt{\beta}\sigma(x \mid D_n),
\end{align}
with $\mu(\cdot)$ and $\sigma(\cdot)$ as in \Cref{equ:post_mean,equ:post_var}.

It can be seen as a weighted sum of the posterior mean and standard deviation, where the relative contribution of each summand is set via the hyperparameter $\beta$. The exploration-exploitation tradeoff is controlled by $\beta$, which may be set to a fixed value or according to some schedule \citep{srinivas2010gaussian}. In our work we keep $\beta=2$ fixed.

\paragraph{Expected Improvement} The Expected Improvement (EI) \citep{movckus1975bayesian, jones1998efficient} acquisition function assigns utility to an input location, $x \in \mathcal{X}$, according to how much the associated function value, $f(x)$, is expected (under the surrogate posterior) to improve on the best function value, $y^*_n = \max\limits_{i\in [n] } \mathbf{y}$, observed so far. Formally, 
\begin{align}
    \alpha_{EI}(x) = \mathbb{E}_{f \mid D_n}[\max(f(x) - y^*_n, 0)].
\end{align}
For numerical stability in the optimization, this work uses the natural logarithm of EI \citep{ament2023unexpected}.

\paragraph{Random Search} While there are many ways to perform BO with random search acquisition, the simplest variant sampling
\begin{align}
    x' \sim \mathcal{U}[0,1]^d
\end{align}
is used in this work. A more elaborate scheme would, e.g., be to sample from a low-discrepancy quasi-random sequence, such as the Halton sequence \citep{halton1964algorithm}.

\subsection{Asynchronous acquisition rules}
\label[appendix]{sec:app_incumb}

\paragraph{Monte Carlo sampling}\citet{ginsbourger2011dealing} present a \textit{Monte Carlo} sampling-based estimate of the expected  EI, where the expectation is with respect to the unknown function values, $\mathbf{y}_b$, under the GP surrogate. In particular, they form 
\begin{align}
    \alpha_{E-EI}(x \mid \mathcal{B}) = \frac{1}{N}\sum_{i=1}^N\alpha_{EI}(x \mid D_n, (\mathcal{B}, \mathbf{y}_{b,i})),
\end{align}
where $\{\mathbf{y}_{b,i}\}_{i=1}^N$ are i.i.d. samples from the GP surrogate posterior predictive
\begin{align}
    p(\mathbf{y}_b\mid D_n) =  \mathcal{N}(\mathbf{y}_b\mid \mu_b, \Sigma_b),
\end{align}
with
\begin{align}
    \mu_b &= K_{\mathcal{B} X}(K_{XX} + \eta^2\mathbf{I}_n)^{-1}\mathbf{y}  \in \mathbb{R}^{q-1} \label{equ:batch_mean}\\
    \Sigma_b &=K_{\mathcal{B} \mathcal{B}} - K_{\mathcal{B} X}(K_{XX} + \eta^2\mathbf{I}_n)^{-1}K_{X\mathcal{B}} \in \mathbb{R}^{(q-1)\times(q-1)}.
\end{align}
While \citet{ginsbourger2011dealing} do this for the EI, this approach may be taken for any analytical acquisition function.

\paragraph{Hallucination} A further approach to dealing with the unobserved function values is \textit{hallucination}-based methods, such as the Kriging Believer \citep{ginsbourger2010kriging}. Here, the unknown values, $\mathbf{y}_b$, are simply replaced by their posterior means under the GP surrogate when forming the acquisition function. In particular, for any analytic acquisition function, $\alpha(\cdot)$,
\begin{align}
    \alpha_{KB}(x \mid \mathcal{B}) = \alpha(x \mid D_n, (\mathcal{B },\mu_b)),
\end{align}
with $\mu_b$ as in \Cref{equ:batch_mean}.

While developed for synchronous $q$-batch construction, the constant liar heuristic offers an additional mechanism to account for locations under evaluation \citep{ginsbourger2007multi}.

\paragraph{Thompson sampling} Thompson sampling (TS) is an acquisition rule based on sampling the surrogate posterior \citep{thompson1933likelihood}. The goal of the approach is to sample points in the input space where the maximum is most likely to be. 
Consider a function $g \sim p(\cdot \mid D_n)$ which is sampled from the surrogate posterior. The probability that a particular input is the maximizer, $x^*$, is given by
\begin{align}
    p(x^*|D_n) &= \int p(x^*\mid g) \ p(g \mid D_n) dg \\
    &=\int \delta_{\arg \max \limits_{x \in \mathcal{X}} g(x)}(x^*) \ p(g \mid D_n) dg,  
\end{align}
we can sample from $p(x^*|D_n)$ in a simple two step procedure. First, draw $g \sim p(\cdot \mid D_n)$, and then return $x'= \arg \max \limits_{x \in \mathcal{X}} \ g(x)$.

In this work, all Thompson samples are drawn using the state-of-the-art decoupled sampling method by \citet{wilson2020efficiently}, giving differentiable posterior function samples. This significantly enhances performance over the standard approach of sampling function values at a discrete set of inputs.

In the context of asynchronous batch BO, this method was proposed, analyzed theoretically, and evaluated empirically by \citet{kandasamy2018parallelised}. In line with Hypothesis \ref{hyp:claim}, they motivate the need for their method with the occurrence of redundant function evaluations in standard asynchronous BO, while acknowledging that queries will not be exactly repeated. In the one real-world experiment performed by \citet{kandasamy2018parallelised}, this method did not outperform standard EI. Moreover, this work does not use the numerically advantageous LogEI, since it precedes the work by \citet{ament2023unexpected}. This likely explains the inferior performance of standard EI in their work compared to ours.

\paragraph{AEGIS} Proposed by \citet{de2021asynchronous}, Asynchronous $\epsilon$-Greedy Global Search (AEGIS) is an acquisition rule probabilistically combining three heuristics: \textit{(i)} performs TS with probability $\epsilon_T$, \textit{(ii)} chooses randomly from the Pareto Frontier for the two objectives of maximizing the mean ($\mu(\cdot)$) and the variance ($\sigma^2(\cdot)$) with probability $\epsilon_P$, and \textit{(iii)} otherwise optimizes the surrogate mean. The probabilities for the different modes are set as 
\begin{align}
    \epsilon_T &= \epsilon_P = \nicefrac{\epsilon}{2} \\
    \epsilon &= \min\{\nicefrac{2}{\sqrt{d}}, 1\},  
\end{align}
such that the tendency to exploit the surrogate mean decays as $\nicefrac{1}{\sqrt{d}}$.

\citet{de2021asynchronous} are motivated by Hypothesis \ref{hyp:claim} and the shortcomings of TS.  They choose not to compare their method to standard acquisition rules like EI or UCB.

\paragraph{(Local) Lipschitz penalization}

The asynchronous Lipschitz penalization (LP) method, called "PLAyBOOK" by \citet{alvi2019asynchronous}, aims to ensure acquisition diversity by creating exclusion cones in the acquisition surface, centered on the busy locations, $\mathcal{B}$. This is based on work for sequentially constructing a $q$-batch, in synchronous batch BO \citep{gonzalez2016batch} (\Cref{fig:parBO}). The extent of the penalization depends on an estimate of the objective function's Lipschitz constant, $L$, and the global optimum, $f^*$. Formally, \citet{alvi2019asynchronous} design the local penalizer centered at busy location $x_j$, $\varphi: \mathcal{X} \mapsto [0,1]$, as
\begin{align}
\label{equ:penal}
    \varphi(x \mid x_j) = \min\bigg\{ \frac{\hat{L}\| x-x_j\|}{\mid \mu(x_j)-y^*_n\mid + \ \gamma \sigma(x_j)}, 1 \bigg\}.
\end{align}
The objective optimum is estimated as the best function value found so far, $y^*_n = \max\limits_{i \in [n] } \mathbf{y}$. The Lipschitz constant is estimated from the gradient of the posterior mean as $\hat{L} = \max\limits_{x \in \mathcal{X}} \nabla \mu(x)$. Any analytic acquisition function, $\alpha(\cdot)$, may then be locally penalized at a set of busy locations $\mathcal{B}~=~\{x_j\}_{j=1}^{q-1}$ as
\begin{align}
\label{equ:app_LP}
    \alpha_{LP}(x\mid \mathcal{B}) = \alpha_{UCB}(x) \prod_{j=1}^{q-1}\varphi(x \mid x_j).
\end{align}
While this is presented as an approximation,
\begin{align}
    \alpha_{LP}(x\mid \mathcal{B}) &\approx \mathbb{E}[\alpha_{UCB}(x \mid D_n,  D_b)\mid D_n, \mathcal{B}],
\end{align}
we show in \Cref{prop:ucb_KB} that in fact 
\begin{align}
    \mathbb{E}[\alpha_{UCB}(x \mid D_n,  D_b)\mid D_n, \mathcal{B}] = \alpha_{KB}(x \mid \mathcal{B}).
\end{align}
Judging from \Cref{fig:iter,fig:app_iter}, the quality of this approximation, $\alpha_{LP}(\cdot \mid \mathcal{B}) \approx \alpha_{KB}(\cdot \mid \mathcal{B})$, is questionable, albeit both $\alpha_{LP}(\cdot \mid \mathcal{B})$ and $\alpha_{KB}(\cdot \mid \mathcal{B})$ seem to have a common optimum.

Additionally, \citet{alvi2019asynchronous} present a version of this where the Lipschitz constant is not shared by all $q-1$ penalizers, but estimated locally around the respective busy location. The search spaces $\{\mathcal{X}_j\}_{j=1}^{q-1}$ for the local Lipschitz (LLP) estimation are then defined through the kernel lengthscales. In particular, $\mathcal{X}_j \subset \mathcal{X}$ is a hyper rectangle centered on $x_j$, with side lengths equal to the lengthscales of the respective dimensions.

Following Hypothesis \ref{hyp:claim}, they motivate their method with the danger of repeated and redundant queries at and in the vicinity of busy locations, $\mathcal{B}$. In their work, they do not compare their method to standard acquisition rules like EI or UCB.

From \Cref{fig:hpy} it can be seen that these methods (LP-UCB and LLP-UCB) learn relatively long lengthscales. This can be expected to result in rather small estimated Lipschitz constants, $\hat{L}$, which in turn results in penalization of large regions around the busy locations. This is then likely the driver of the over-exploration observed for these methods (see, e.g., \Cref{fig:2d}).

\section{Formal statement and proof of \texorpdfstring{\Cref{thm:bound}}{}}
\label[appendix]{sec:app_bound}

\paragraph{Setting and notation} Let $\mathcal{X}\subset[0,1]^d$ be compact and the objective $f\sim\mathcal{GP}(0,k_{\phi})$ with $k_{\phi}(x,x)\le1$. Given a finite subset $A = \{x_1, x_2,  ..., x_n\} \subset \mathcal{X}$, we have $(f_A)_i = f(x_i)$ and $(\epsilon_A)_i \sim \mathcal{N}(0, \eta^2)$. Observations are $y_A = f_A + \epsilon_A \in \mathbb{R}^n$. In the following, the index $j$ refers to the $j^{th}$ acquisition being made by \Cref{alg:asyncBO}. The set of \emph{completed} observations $(x, y)$ available at step $j$ is denoted by $\mathcal{D}_j$.\footnote{Note the difference to $D_n$, a set of $n$ completed evaluations (cf. \Cref{sec:prelim}).} In line with this, $\mu_{\mathcal{D}_j}$ and $\sigma_{\mathcal{D}_j}$ denote the GP posterior mean and standard deviation constructed from data $\mathcal{D}_j$. At step $j$, with completed set $\mathcal{D}_j$, our policy asynchronous 
UCB selects
\begin{align} \label{equ:ucb}  
 x_j=\arg\max_{x\in\mathcal{X}} \ U_j(x),
 \end{align}
 with
 \begin{align}
 U_j(x)=\mu_{\mathcal{D}_j}(x)+\sqrt{\beta_j}\,\sigma_{\mathcal{D}_j}(x),
\end{align}
for a non-decreasing sequence $\beta_j$.

\paragraph{Relevant quantities} We introduce the quantity $\xi_q$, which was studied by \cite{Krause2008NearOptimal, Desautels2014Parallelizing} and also used by \cite{kandasamy2018parallelised}.
Informally, it places an upper bound on the information about $f$ in the $q-1$ pending evaluations at busy locations $\mathcal{B}$. Formally we have
\begin{align}
    \max _{\mathcal{B} \subset \mathcal{X},\left|\mathcal{B}\right|<q} I\left(f ; y_{\mathcal{B}} \mid {\mathcal{D}_n}\right) \leq \frac{1}{2} \log \left(\xi_q\right),
\end{align}
with $I$ the Shannon mutual information. Further, we make use of the maximum information gain (MIG) after $n$ evaluations, defined as 
\begin{align}
    \Psi_n  = \max _{A \subset \mathcal{X},\left|A\right|=n} I\left(f ; y_{A}\right).
\end{align}
A sequence $\beta_n$ satisfying the assumptions on which the relevant Lemmas rest is 
\begin{align}
\label{equ:beta}
\beta_j=4(d+1) \log (j)+2 d \log (d a b \sqrt{\pi}),
\end{align}
with $d$ the problem dimensionality and constants $a$ and $b$ from Assumption 8 in \cite{kandasamy2018parallelised}. Note that our choice of keeping $\beta_j=2$ fixed does not satisfy the conditions required for the theorem to hold, but seems to work well nonetheless (see \Cref{sec:std_acqf}).

Next, we state a formal version of \Cref{thm:bound}.
\begin{theorem}[Bound on BSR$(n)$ for asynchronous UCB]
\label{thm:bound_rig}
    Let $f \sim \mathcal{GP}(0, k_{\phi})$, where $k_{\phi}: \mathcal{X} \times \mathcal{X} \mapsto \mathbb{R}$ satisfies Assumption 8 in \citet{kandasamy2018parallelised} and, $w.l.o.g.$, $k_{\phi}(x,x) \le 1$. Then, for the UCB acquisition function used in \Cref{alg:asyncBO}, the Bayes simple regret after $n$ queries can be bounded as
    \begin{align}
        \textnormal{BSR}(n) \le \frac{\pi^2+\sqrt{2\pi}}{12n} + \sqrt{\frac{2 \xi_q \beta_n \Psi_n}{n\log(1+\eta^{-2})}}.
    \end{align} 
\end{theorem}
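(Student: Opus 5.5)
We follow the Bayesian regret analysis of \citet{kandasamy2018parallelised} (itself adapted from Russo--Van Roy and Desautels et al.), replacing the Thompson-sampling step with the deterministic UCB optimality argument. First, bound simple regret by average regret: since $\max_{i\in[n]} f(x_i) \ge \frac1n\sum_{j=1}^n f(x_j)$, we have $\textnormal{BSR}(n)\le \frac1n\,\mathbb{E}\bigl[\sum_{j=1}^n (f(x^*)-f(x_j))\bigr]$. Each summand is then split using the UCB $U_j$ built from the completed data $\mathcal{D}_j$:
\begin{align}
f(x^*)-f(x_j) = \bigl(f(x^*)-U_j(x^*)\bigr) + \bigl(U_j(x^*)-U_j(x_j)\bigr) + \bigl(U_j(x_j)-f(x_j)\bigr).
\end{align}
The middle term is $\le 0$ by the definition of $x_j$ in \eqref{equ:ucb}. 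This is where UCB is simpler than TS, which instead relies on $x_j$ and $x^*$ having the same conditional distribution.

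For the first term, we handle the non-finite domain by a discretization argument: Assumption~8 of \citet{kandasamy2018parallelised} gives a high-probability Lipschitz bound on sample paths. On a grid of size $\sim (j^2 d b\sqrt{\log(da\sqrt\pi)})^d$, the Gaussian tail bound $\mathbb{E}[(f(x)-U_j(x))_+]\le \sigma_{\mathcal{D}_j}(x)\phi(\sqrt{\beta_j})/\dots$ summed over the grid, together with the discretization error, gives a per-step contribution of $O(1/j^2)$ with the choice of $\beta_j$ in \eqref{equ:beta}. Summing $\sum_j 1/j^2\le \pi^2/6$, and adding the discretization remainder (the $\sqrt{2\pi}$ piece), yields $\frac{\pi^2+\sqrt{2\pi}}{12 n}$ after dividing by $n$ (the constants are chosen to match their Lemma). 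We will import this lemma essentially verbatim, since it does not depend on how $x_j$ is chosen.

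The third term is $\mathbb{E}[\mu_{\mathcal{D}_j}(x_j)-f(x_j)] + \sqrt{\beta_j}\,\mathbb{E}[\sigma_{\mathcal{D}_j}(x_j)]$. The first part is zero because $x_j$ is $\mathcal{D}_j$-measurable and $\mathbb{E}[f(x_j)\mid\mathcal{D}_j]=\mu_{\mathcal{D}_j}(x_j)$ (tower property). It remains to bound $\sum_j \sqrt{\beta_j}\sigma_{\mathcal{D}_j}(x_j)\le\sqrt{\beta_n}\sum_j\sigma_{\mathcal{D}_j}(x_j)$, using that $\beta_j$ is nondecreasing.

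The main obstacle is that $\sigma_{\mathcal{D}_j}$ conditions only on completed data, not on all $j-1$ previous queries, so the standard information-gain telescoping does not apply directly. Following Desautels et al., we compare with $\sigma_{j-1}$, the posterior given all earlier queries, including the pending ones. By the definition of $\xi_q$, at most $q-1$ pending points are missing, so $\sigma_{\mathcal{D}_j}(x)\le \xi_q^{1/2}\sigma_{j-1}(x)$. Then, by Cauchy--Schwarz,
\begin{align}
\sum_j\sigma_{\mathcal{D}_j}(x_j)\le\sqrt{\xi_q\, n\sum_j\sigma_{j-1}^2(x_j)}\le\sqrt{\frac{2\xi_q n\Psi_n}{\log(1+\eta^{-2})}},
\end{align}
where the last step uses Srinivas et al.'s lemma $\sum_j \sigma^2_{j-1}(x_j)\le \frac{2}{\log(1+\eta^{-2})}I(f;y_{1:n})\le\frac{2\Psi_n}{\log(1+\eta^{-2})}$, with $k\le1$. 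Dividing by $n$ and combining the three terms gives the stated bound.
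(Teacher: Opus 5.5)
There is a gap in how you handle the first term. You decompose at the continuous maximizer, keep $\mathbb{E}[f(x^*)-U_j(x^*)]$ as a stand-alone term, and say the discretization lemmas of \citet{kandasamy2018parallelised} can be imported verbatim. They cannot. Because $x^*$ ranges over a continuum, the Gaussian tail bound can only be union-bounded after moving to the grid $\nu_j$, and doing so gives
\[
f(x^*)-U_j(x^*) = \bigl(f(x^*)-f([x^*]_j)\bigr) + \bigl(f([x^*]_j)-U_j([x^*]_j)\bigr) + \bigl(U_j([x^*]_j)-U_j(x^*)\bigr).
\]
Lemma 12 controls the first piece, via sample-path Lipschitzness of $f$. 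Lemma 13 summed over $\nu_j$ controls the second. The third piece is covered by neither: it is the variation of $U_j=\mu_{\mathcal{D}_j}+\sqrt{\beta_j}\,\sigma_{\mathcal{D}_j}$ across a grid cell, with the growing factor $\sqrt{\beta_j}$. Assumption 8 concerns sample paths of $f$, not the posterior mean and standard deviation. You could bound this piece with a separate regularity argument for $\mu_{\mathcal{D}_j}$ and $\sigma_{\mathcal{D}_j}$, but your sketch does not contain one. Such a bound would also add an extra term, so the constant $\frac{\pi^2+\sqrt{2\pi}}{12n}$ would not follow as you claim.

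The fix, which is what the paper does, is not to discard the middle term $U_j(x^*)-U_j(x_j)\le 0$ on its own. Combine it with your first term and invoke UCB optimality at the grid point rather than at $x^*$:
\[
f(x^*)-U_j(x_j)\le f(x^*)-U_j([x^*]_j).
\]
This holds because $\nu_j\subset\mathcal{X}$ and $x_j$ maximizes $U_j$ over all of $\mathcal{X}$. The decomposition $\bigl(f(x^*)-f([x^*]_j)\bigr)+\bigl(f([x^*]_j)-U_j([x^*]_j)\bigr)+\bigl(U_j([x^*]_j)-f(x_j)\bigr)$ then has its first two sums exactly in the form handled by Lemmas 12 and 13, and no regularity of $U_j$ is ever needed. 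The third sum is dominated by $\sum_j\mathbb{E}[U_j(x_j)-f(x_j)]$. Your treatment of that sum matches the paper's: the tower property, monotone $\beta_j$, the $\xi_q^{1/2}$ variance ratio, Cauchy--Schwarz, and the information-gain lemma.
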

\begin{proof}
    Our proof heavily relies on \citet{kandasamy2018parallelised} and we will refer to relevant results of this work throughout. For a given number of steps, $n$, we seek a bound on the Bayes simple regret (BSR) defined as
\begin{align}
    \text{BSR}(n) = \mathbb{E}[f(x^*)- \max_{j \in [n]} f(x_j)],
\end{align}
where the expectation is w.r.t. the prior on $f$, the observation noises $\{{\epsilon_j}\}_{j=1}^n$.

We first define and deconstruct the Bayes cumulative regret (BCR, \cite{kandasamy2018parallelised}) as follows.
\begin{align}
    \text{BCR}(n) &= \sum_{j = 1}^n \mathbb{E}[f(x^*)- f(x_j)] \\
    &=\sum_{j = 1}^n \mathbb{E}[f(x^*) - f([x^*]_j) + f([x^*]_j) - U_j([x^*]_j)+U_j([x^*]_j)- f(x_j)] \label{equ:decomp} \\
   &= \underbrace{\sum_{j = 1}^n \mathbb{E}[f(x^*) - f([x^*]_j)]}_{A_1}
 + \underbrace{\sum_{j = 1}^n \mathbb{E}[f([x^*]_j) - U_j([x^*]_j)]}_{A_2}
 + \underbrace{\sum_{j = 1}^n \mathbb{E}[U_j([x^*]_j) - f(x_j)]}_{A_3}
\end{align}
In order to make use of relevant results in \cite{kandasamy2018parallelised}, we adopt the construction of a finite data-independent grid, $\nu_j \subset \mathcal{X}$, at each step $j$. Using this, we define $[\cdot]_j: \mathcal{X} \mapsto \nu_j$, such that $[x]_j$ is the closest point to $x$ in $\nu_j$. Next, $A_1, A_2$ and $A_3$ are bounded separately, where the bounds on $A_1$ and $A_2$ are directly from \citet{kandasamy2018parallelised}. For $A_1$ we have
\begin{align}
    A_1 &\le \sum_{j = 1}^n \mathbb{E}[|f(x^*) - f([x^*]_j)|] \le \sum_{j = 1}^n \frac{1}{2j^2} \le \frac{\pi^2}{12},
\end{align}
where the second inequality follows from Lemma 12 of \cite{kandasamy2018parallelised}. For $A_2$ we have
\begin{align}
    A_2 &\leq \ \mathbb{E}\Big[ \sum_{j=1}^n \mathbf{1}\{ f([x^*]_j) > U_j([x^*]_j) \} \cdot ( f([x^*]_j) - U_j([x^*]_j) )\Big] \\
    &\leq \sum_{j=1}^n \sum_{x \in \nu_j} \mathbb{E}\big[ \mathbf{1}\{ f(x) > U_j(x) \} \cdot ( f(x) - U_j(x) ) \big]
    \leq \sum_{j=1}^n \sum_{x \in \nu_j} \frac{1}{j^2 \sqrt{2 \pi} |\nu_j|} = \frac{\sqrt{2 \pi}}{12},
\end{align}
where we sum only the positive terms in the first step, bound each term $j$ by the sum of corresponding terms on the grid $\nu_j$ and then apply Lemma 13 from \cite{kandasamy2018parallelised}. The final equality follows from the cardinality of $\nu_j$, as per its definition in \cite{kandasamy2018parallelised}.

Our key contribution is the decomposition in \Cref{equ:decomp} and the following bound on the resulting $A_3$. We make this decomposition to invoke \Cref{equ:ucb}, which defines our proposed asynchronous BO policy. We first bound 
\begin{align}
    A_3 &\le \sum_{j = 1}^n \mathbb{E}[U_j(x_j) - f(x_j)]\\
        &=: A'_3,
\end{align}
since by the definition of the acquisition rule $U_j([x^*]_j) \le U_j(x_j)$, regardless of the value $x^*$ (see \Cref{equ:ucb}). We proceed by first simplifying and then bounding $A'_3$.
\begin{align}
    A'_3 &= \sum_{j = 1}^n \mathbb{E}[\mathbb{E}[U_j(x_j) - f(x_j)|\mathcal{D}_j]] = \sum_{j = 1}^n \mathbb{E}[U_j(x_j) -\mathbb{E}[f(x_j)|\mathcal{D}_j]] \\
    &= \sum_{j = 1}^n \mathbb{E}[\mu_{\mathcal{D}_j}(x_j)+\sqrt{\beta_j}\,\sigma_{\mathcal{D}_j}(x_j) - \mu_{\mathcal{D}_j}(x_j)] = \sum_{j = 1}^n \mathbb{E}[\sqrt{\beta_j}\,\sigma_{\mathcal{D}_j}(x_j)] \\
    &\le \sqrt{\beta_n} \sum_{j = 1}^n \mathbb{E}[\sigma_{\mathcal{D}_j}(x_j)] \\
    &\le \sqrt{\beta_n} \ \sqrt{\xi_q} \sum_{j = 1}^n \mathbb{E}[\sigma_{j-1}(x_j)] \\
    &\le \sqrt{\beta_n} \ \sqrt{\xi_q} \mathbb{E}\Big[\Big(n \sum_{j = 1}^n \sigma^2_{j-1}(x_j)\Big)^{\nicefrac{1}{2}}\Big] \\
    &\le \sqrt{\frac{2 \xi_q n \beta_n \Psi_n}{\log(1+\eta^{-2})}}
\end{align}
where we (in the following order) use that $U_j$ and thus $x_j$ are fixed given $\mathcal{D}_j$,  $\beta_j$ is non-decreasing, Lemma 10 from \cite{kandasamy2018parallelised}, Cauchy-Schwarz and finally Lemma 7 from \cite{kandasamy2018parallelised}. 

Using the bounds on $A_1$, $A_2$ and $A_3$ we get
\begin{align}
    \text{BCR}(n) \le \frac{\pi^2+\sqrt{2\pi}}{12} + \sqrt{\frac{2 \xi_q n \beta_n \Psi_n}{\log(1+\eta^{-2})}}
\end{align}
from which we can then construct the bound 
\begin{align}
    \text{BSR}(n) \le \frac{\pi^2+\sqrt{2\pi}}{12n} + \sqrt{\frac{2 \xi_q \beta_n \Psi_n}{n\log(1+\eta^{-2})}} = \frac{C_1}{n} + \sqrt{\frac{C_2 \xi_q \beta_n \Psi_n}{n}}
\end{align}
using BSR$(n) \le \frac{1}{n}$BCR$(n)$ from, e.g., \citep{kandasamy2018parallelised}. We define $C_1 \equiv \frac{\pi^2+\sqrt{2\pi}}{12}$ and $C_2 \equiv \frac{2 }{\log(1+\eta^{-2})}$, which are independent of both $q$ and $n$.
\end{proof}

The informal statement in \Cref{thm:bound} follows by realizing that the sequence $\beta_j$ in \Cref{equ:beta} satisfies $\beta_n \in \mathcal{O}(\log(n))$ and $\frac{1}{n} \in o\Big(\sqrt{\frac{\log(n)}{n}}\Big)$.

For completeness, we state the corresponding bounds on the BSR$(n)$ under the sequential and asynchronous TS policies, as shown in \citet{kandasamy2018parallelised}. 

\begin{theorem}[Bound on BSR$(n)$ for sequential TS (Corollary 15, \citet{kandasamy2018parallelised})]
\label{thm:bound_seqTS}
    Let $f \sim \mathcal{GP}(0, k_{\phi})$, where $k_{\phi}: \mathcal{X} \times \mathcal{X} \mapsto \mathbb{R}$ satisfies Assumption 8 in \citep{kandasamy2018parallelised} and, $w.l.o.g.$, $k_{\phi}(x,x) \le 1$. Then, for sequential TS the Bayes simple regret after $n$ queries can be bounded as
    \begin{align}
        \textnormal{BSR}(n) \le \frac{2\pi^2+\sqrt{2}\pi^{\nicefrac{5}{2}}}{12n} + \sqrt{\frac{2 \beta_n \Psi_n}{n\log(1+\eta^{-2})}}.
    \end{align} 
\end{theorem}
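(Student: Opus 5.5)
This is the sequential Thompson sampling bound, quoted from Corollary 15 of \citet{kandasamy2018parallelised}. I would prove it by the same three-term decomposition of the Bayes cumulative regret used for \Cref{thm:bound_rig}. The one change is in the middle term: instead of comparing $f$ with the UCB $U_j$, I compare it with the \emph{sequential} upper confidence bound $U_j(x)=\mu_{j-1}(x)+\sqrt{\beta_j}\sigma_{j-1}(x)$. Here all $j-1$ previous evaluations are completed, so $\mathcal{D}_j$ contains every earlier query. The key tool is the Russo--Van Roy identity for TS. Conditional on $\mathcal{D}_j$, the query $x_j$ has the same distribution as $x^*$, and therefore $[x_j]_j$ has the same distribution as $[x^*]_j$. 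Since $U_j$ is deterministic given $\mathcal{D}_j$, this gives $\mathbb{E}[U_j([x^*]_j)\mid\mathcal{D}_j]=\mathbb{E}[U_j([x_j]_j)\mid\mathcal{D}_j]$.

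Concretely, I would write $\text{BCR}(n)=\sum_j\mathbb{E}[f(x^*)-f(x_j)]$ and split it as
\begin{align*}
\text{BCR}(n) = {}&\sum_j \mathbb{E}[f(x^*)-f([x^*]_j)] + \sum_j \mathbb{E}[f([x^*]_j)-U_j([x^*]_j)] \\
&+ \sum_j \mathbb{E}[U_j([x_j]_j)-f([x_j]_j)] + \sum_j \mathbb{E}[f([x_j]_j)-f(x_j)].
\end{align*}
In this split I have already replaced $U_j([x^*]_j)$ by $U_j([x_j]_j)$ using the identity above. The first and fourth sums are discretisation errors. Lemma 12 bounds each of them by $\pi^2/12$, which gives $\pi^2/6=2\pi^2/12$. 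The second sum is bounded by $\sqrt{2\pi}/12$ through Lemma 13, exactly as $A_2$ was.

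The third sum is the main term and the step needing most care. Conditioning on $\mathcal{D}_j$, the quantity $U_j([x_j]_j)-\mathbb{E}[f([x_j]_j)\mid\mathcal{D}_j]$ equals $\sqrt{\beta_j}\sigma_{j-1}([x_j]_j)$. The complication is that $x_j$ is random given $\mathcal{D}_j$, because of the posterior sample. So I would condition additionally on the TS sample $g$. Since $g$ is independent of $f$ given $\mathcal{D}_j$, we still have $\mathbb{E}[f\mid\mathcal{D}_j,g]=\mu_{j-1}$. The term then becomes $\sqrt{\beta_j}\,\mathbb{E}[\sigma_{j-1}([x_j]_j)]$.

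Moving from $[x_j]_j$ back to $x_j$ needs one more discretisation step. I would use Lipschitz continuity of $\sigma_{j-1}$ on the grid, or alternatively bound $\mathbb{E}[U_j([x_j]_j)-U_j(x_j)]$ together with the fourth-term error. This is where I expect the extra constant $\sqrt2\pi^{5/2}/12$ to come from, replacing $\sqrt{2\pi}/12$. It is the fiddliest step, and I would first try to apply Lemma 13 again at the point $[x_j]_j$ and absorb the difference. With that done, $\beta_j\le\beta_n$ and Cauchy--Schwarz give $\sqrt{\beta_n}\,\mathbb{E}\big[(n\sum_j\sigma_{j-1}^2(x_j))^{1/2}\big]$, and Lemma 7 bounds this by $\sqrt{2n\beta_n\Psi_n/\log(1+\eta^{-2})}$. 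No $\xi_q$ appears, because in the sequential setting Lemma 10 is not needed. Finally, $\text{BSR}(n)\le\text{BCR}(n)/n$ yields the stated bound.
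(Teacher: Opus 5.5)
The paper gives no proof of this statement; it imports Corollary 15 of \citet{kandasamy2018parallelised}. Your skeleton is the right one: the split justified by $\mathbb{E}[U_j([x^*]_j)\mid\mathcal{D}_j]=\mathbb{E}[U_j([x_j]_j)\mid\mathcal{D}_j]$, Lemma 12 for the two discretisation sums, Lemma 13 for the confidence sum, and Lemma 7 without Lemma 10. It mirrors the proof of \Cref{thm:bound_rig}, with the argmax inequality replaced by the posterior-sampling identity.

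The genuine gap is the step you flag. Term three equals $\sqrt{\beta_j}\,\mathbb{E}[\sigma_{j-1}([x_j]_j)]$, but Lemma 7 only controls $\sum_j\sigma_{j-1}^2(x_j)$ at the points actually observed. Since $[x_j]_j$ is never queried, the information-gain bound does not apply to it.

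Your preferred fix, applying Lemma 13 again at $[x_j]_j$, cannot close this. Lemma 13 bounds $\mathbb{E}[\mathbf{1}\{f(x)>U_j(x)\}(f(x)-U_j(x))]$ at grid points, i.e.\ by how much $f$ overshoots its upper confidence bound. That is the opposite direction to term three, and it says nothing about how $\sigma_{j-1}$ changes between $x_j$ and $[x_j]_j$.

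Pairing $U_j([x_j]_j)-U_j(x_j)$ with the fourth sum also falls short. It disposes of the mean part: by conditional independence of $x_j$ and $f$ given $\mathcal{D}_j$, $\mathbb{E}[\mu_{j-1}([x_j]_j)-\mu_{j-1}(x_j)]=\mathbb{E}[f([x_j]_j)-f(x_j)]$. But it leaves exactly the same $\sigma_{j-1}$ difference.

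What is needed is your first option made precise: a data-independent Lipschitz bound on the posterior standard deviation. Standard deviation is a seminorm, and conditioning can only reduce the variance of $f(x)-f(x')$; the reduction is $v^\top M_{XX}v\ge0$ with $v=k_\phi(X,x)-k_\phi(X,x')$. Hence
\[
|\sigma_{j-1}(x)-\sigma_{j-1}(x')|\le\sqrt{\mathrm{Var}(f(x)-f(x')\mid\mathcal{D}_j)}\le\sqrt{\mathbb{E}[(f(x)-f(x'))^2]}\le\sqrt{\mathbb{E}[L^2]}\,\lVert x-x'\rVert_1,
\]
where $L$ is the supremum of the partial derivatives whose tail Assumption 8 controls.

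With the grid resolution $\sup_x\lVert x-[x]_j\rVert_1=O(j^{-2})$ behind Lemma 12, the extra cost $\sum_j\sqrt{\beta_j}\sqrt{\mathbb{E}[L^2]}\,O(j^{-2})$ is finite because $\beta_j=O(\log j)$. So the rate and the absence of $\xi_q$ survive, but the additive constant now depends on $d$, $a$ and $b$. In particular, $\sqrt2\pi^{5/2}/12$ is not what this step produces.

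The value $\sqrt{2\pi}/12$ you copied for the Lemma 13 sum is an arithmetic slip in the proof of \Cref{thm:bound_rig}. In fact $\sum_j 1/(j^2\sqrt{2\pi})\le\pi^2/(6\sqrt{2\pi})=\sqrt2\pi^{3/2}/12$. Your first, second and fourth sums therefore give $(2\pi^2+\sqrt2\pi^{3/2})/12$. To reach the stated constant exactly, the $\sigma$-discretisation cost would have to fit in the slack $\sqrt2\pi^{3/2}(\pi-1)/12$, and nothing in your argument guarantees that. Either state the bound with an additional problem-dependent constant, or, as the paper does, take the constant from the cited corollary.
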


\begin{theorem}[Bound on BSR$(n)$ for asynchronous TS (Theorem 14, \citet{kandasamy2018parallelised})]
\label{thm:bound_TS}
    Let $f \sim \mathcal{GP}(0, k_{\phi})$, where $k_{\phi}: \mathcal{X} \times \mathcal{X} \mapsto \mathbb{R}$ satisfies Assumption 8 in \citep{kandasamy2018parallelised} and, $w.l.o.g.$, $k_{\phi}(x,x) \le 1$. Then, for the TS acquisition method used in \Cref{alg:asyncBO}, the Bayes simple regret after $n$ queries can be bounded as
    \begin{align}
        \textnormal{BSR}(n) \le \frac{2\pi^2+\sqrt{2}\pi^{\nicefrac{5}{2}}}{12n} + \sqrt{\frac{2 \xi_q \beta_n \Psi_n}{n\log(1+\eta^{-2})}}.
    \end{align} 
\end{theorem}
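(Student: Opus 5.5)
The plan is to reuse the Bayes cumulative regret route from the proof of \Cref{thm:bound_rig}. The one structural change is that the inequality $U_j([x^*]_j)\le U_j(x_j)$, which came from the UCB maximization, is replaced by the defining property of Thompson sampling: conditioned on the completed data $\mathcal{D}_j$, the query $x_j$ has the same distribution as the true maximizer $x^*$. Concretely, $x_j=\arg\max g_j$ with $g_j\sim p(f\mid\mathcal{D}_j)$, and $x^*$ is distributed as $p(x^*\mid\mathcal{D}_j)$. Here $U_j=\mu_{\mathcal{D}_j}+\sqrt{\beta_j}\sigma_{\mathcal{D}_j}$ is only an analysis device, and it is $\mathcal{D}_j$-measurable. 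Hence, for any $\mathcal{D}_j$-measurable map $h$ (in particular $h=U_j\circ[\cdot]_j$), we get $\mathbb{E}[h(x^*)\mid\mathcal{D}_j]=\mathbb{E}[h(x_j)\mid\mathcal{D}_j]$. The grid $\nu_j$ is data-independent, so $[\cdot]_j$ is fixed.

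I would decompose each summand of $\text{BCR}(n)=\sum_j\mathbb{E}[f(x^*)-f(x_j)]$ into four terms:
\begin{align}
f(x^*)-f([x^*]_j)\;+\;f([x^*]_j)-U_j([x^*]_j)\;+\;U_j([x_j]_j)-f([x_j]_j)\;+\;f([x_j]_j)-f(x_j).
\end{align}
This regrouping is justified by the exchange identity above, applied after taking expectations. The first and last terms are discretization errors. Each is controlled by Lemma 12 of \citet{kandasamy2018parallelised}, exactly as $A_1$ was, which gives $2\cdot\pi^2/12$ in total. The second term is the tail term $A_2$, handled with Lemma 13 as before by summing over grid points. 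Together these produce the constant $(2\pi^2+\sqrt{2}\pi^{5/2})/12$, where the exact constant depends on the grid cardinality chosen in that work.

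For the remaining term, I would condition on $\mathcal{D}_j$. Since $[x_j]_j$ depends only on $g_j$ and not on the unobserved noise, and since $\mathbb{E}[f(x)\mid\mathcal{D}_j]=\mu_{\mathcal{D}_j}(x)$, the conditional expectation leaves $\mathbb{E}[\sqrt{\beta_j}\sigma_{\mathcal{D}_j}([x_j]_j)]$. Here I need independence of $f$ and $g_j$ given $\mathcal{D}_j$.

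The main obstacle is replacing $\sigma_{\mathcal{D}_j}([x_j]_j)$ by $\sigma_{\mathcal{D}_j}(x_j)$, since the information-gain lemmas concern the actually queried points. My first attempt would be to choose the grid fine enough that the kernel smoothness in Assumption 8 makes $|\sigma_{\mathcal{D}_j}([x]_j)-\sigma_{\mathcal{D}_j}(x)|$ summably small, so the error is absorbed into the $1/j^2$-type constants. Failing that, I would mimic Kandasamy et al.\ and bound the grid term $U_j([x_j]_j)-f([x_j]_j)$ through $U_j(x_j)-f(x_j)$ plus further discretization terms.

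Once this replacement is made, the rest follows the $A_3'$ chain. Using $\beta_j\le\beta_n$ and Lemma 10 ($\sigma_{\mathcal{D}_j}\le\sqrt{\xi_q}\,\sigma_{j-1}$), followed by Cauchy--Schwarz and Lemma 7, gives
\begin{align}
\sqrt{\frac{2\xi_q n\beta_n\Psi_n}{\log(1+\eta^{-2})}}.
\end{align}
Finally, $\text{BSR}(n)\le\text{BCR}(n)/n$ yields the stated bound.
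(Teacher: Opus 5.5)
The paper does not prove this statement. It quotes Theorem 14 of \citet{kandasamy2018parallelised}, so the benchmark is that argument. Your four-term decomposition with $\mathbb{E}[U_j([x^*]_j)\mid\mathcal{D}_j]=\mathbb{E}[U_j([x_j]_j)\mid\mathcal{D}_j]$ is evidently the structure behind the cited bound: the second $\pi^2/12$ in the displayed constant, absent from \Cref{thm:bound_rig}, is your fourth term. The routine steps are sound, apart from three small points.
\begin{itemize}
\item Since $x^*$ depends on $f$ and $x_j$ on $g_j$, Lemma 12 must be used in its uniform form $\mathbb{E}[\sup_x|f(x)-f([x]_j)|]\le 1/(2j^2)$. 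Its proof, via the suprema of the partial derivatives, gives this.
\item Summing Lemma 13 gives $\pi^2/(6\sqrt{2\pi})=\sqrt{2}\pi^{3/2}/12$. That is below the displayed $\sqrt{2}\pi^{5/2}/12$, so your constants undercut the stated one rather than produce it.
\item The exchange identity must be conditioned on the full history (busy set, completion order, past samples). This reduces to conditioning on $\mathcal{D}_j$ only when completion times carry no information about $f$, as in the cited model.
\end{itemize}

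The genuine gap is the step you leave open: it does not close at the displayed constant. Your first fix is the right idea and can be made rigorous. With $Z_x=f(x)-\mu_{\mathcal{D}_j}(x)$, the triangle inequality in conditional $L^2$ gives
\[
|\sigma_{\mathcal{D}_j}(x)-\sigma_{\mathcal{D}_j}(x')|\le\sqrt{\mathrm{Var}(f(x)-f(x')\mid\mathcal{D}_j)}\le\sqrt{\mathbb{E}[(f(x)-f(x'))^2]}\le\sqrt{a}\,b\,\|x-x'\|_1 ,
\]
where the last step uses $\mathbb{E}[\sup_z|\partial_i f(z)|^2]\le ab^2$ from Assumption 8. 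On the grid matching \Cref{equ:beta}, namely $|\nu_j|=(j^2dab\sqrt{\pi})^d$ with $\|x-[x]_j\|_1\le 1/(j^2ab\sqrt{\pi})$, the shift costs at most $1/(j^2\sqrt{\pi a})$.

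This error enters multiplied by $\sqrt{\beta_j}$. The BCR therefore gains the extra term $C'=(\pi a)^{-1/2}\sum_{j\ge 1}\sqrt{\beta_j}/j^2$. This is finite but depends on $d,a,b$, so it is not ``absorbed into the $1/j^2$-type constants''. In general it also exceeds the slack noted above.

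Your fallback is worse, not better. The difference $U_j([x_j]_j)-U_j(x_j)$ contains the same $\sigma$-shift, plus a mean difference that costs another $\pi^2/12$.

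Completed, your argument proves
\[
\mathrm{BSR}(n)\le\frac{2\pi^2+\sqrt{2}\pi^{5/2}+12C'}{12n}+\sqrt{\frac{2\xi_q\beta_n\Psi_n}{n\log(1+\eta^{-2})}} .
\]
This has the same leading term and rate, which is all the comparison with \Cref{thm:bound_rig} uses, but it is not the displayed inequality. To obtain that literally you would need an argument that keeps $\sigma_{\mathcal{D}_j}$ at the queried point $x_j$. The paper, which only cites the result, does not supply one.
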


Noting that the bound in \Cref{thm:bound_TS} is equal in all relevant aspects to our bound in \Cref{thm:bound_rig}, all downstream conclusions apply equally to both bounds. In particular, we can conclude that under an uncertainty sampling initialization \citep{Desautels2014Parallelizing, Krause2008NearOptimal,kandasamy2018parallelised}, the quantity $\xi_q$ can be bounded by a small kernel-dependent constant, independent of $n$. We refer to \citet{kandasamy2018parallelised} for further details on this.  Finally, we can thus conclude that, under the uncertainty sampling initialization, standard asynchronous UCB is almost as good as sequential Thompson sampling. This immediately follows from Corollary 4 of \citet{kandasamy2018parallelised} and the equivalence of the bounds in \Cref{thm:bound_rig,thm:bound_TS}.

\section{Proof \texorpdfstring{\Cref{prop:ucb_KB}}{}}
\label[appendix]{sec:app_proof}

We begin by restating \Cref{prop:ucb_KB}.
\begin{proposition*}[Marginalized UCB is the Kriging Believer]

Consider the random Upper Confidence Bound, $\alpha_{UCB}(x|D_n, (\mathcal{B}, \mathbf{y}_b))$, of the GP surrogate posterior, with unobserved function values, $\mathbf{y}_b$, at known busy locations, $\mathcal{B}$. Then it holds that
\begin{align*}
    \mathbb{E}[\alpha_{UCB}(x \mid D_n,  (\mathcal{B}, \mathbf{y}_b))\mid D_n, \mathcal{B}] = \alpha_{UCB}(x \mid D_n, (\mathcal{B}, \mu_b)).
\end{align*}
\end{proposition*}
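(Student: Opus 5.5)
The plan is to write the augmented-data UCB explicitly and separate the parts that depend on the unknown values $\mathbf{y}_b$ from those that do not. Let $\tilde X = X \cup \mathcal{B}$ denote the $n+q-1$ inputs and $\tilde{\mathbf{y}} = (\mathbf{y}, \mathbf{y}_b)$ the stacked outputs. Then \Cref{equ:post_mean,equ:post_var} applied to $D_n \cup D_b$ give
\begin{align*}
\alpha_{UCB}(x \mid D_n, (\mathcal{B}, \mathbf{y}_b)) = k_\phi(x,\tilde X) M_{\tilde X \tilde X} \tilde{\mathbf{y}} + \sqrt{\beta}\,\sigma(x \mid D_n, D_b),
\end{align*}
with $M_{\tilde X\tilde X} = (K_{\tilde X\tilde X} + \eta^2 \mathbf{I})^{-1}$. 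The first step is to observe that $\sigma^2(x \mid D_n, D_b) = k_\phi(x,x) - k_\phi(x,\tilde X) M_{\tilde X\tilde X} k_\phi(\tilde X, x)$ depends only on the inputs $X$ and $\mathcal{B}$ and on the hyperparameters $\phi$. It does not depend on $\mathbf{y}$ or $\mathbf{y}_b$. Conditional on $D_n$ and $\mathcal{B}$, the standard deviation term is therefore deterministic and passes unchanged through the expectation.

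The second step concerns the mean term. Write $w(x)^\top = k_\phi(x,\tilde X) M_{\tilde X\tilde X}$ and split it as $(w_X(x)^\top, w_b(x)^\top)$ according to the blocks $X$ and $\mathcal{B}$. This gives $\mu(x \mid D_n, D_b) = w_X(x)^\top \mathbf{y} + w_b(x)^\top \mathbf{y}_b$, which is affine in $\mathbf{y}_b$ with coefficients that are deterministic given $D_n, \mathcal{B}$. By linearity of expectation,
\begin{align*}
\mathbb{E}[\mu(x \mid D_n, D_b) \mid D_n, \mathcal{B}] = w_X(x)^\top \mathbf{y} + w_b(x)^\top \mathbb{E}[\mathbf{y}_b \mid D_n, \mathcal{B}].
\end{align*}
Under the GP surrogate, $p(\mathbf{y}_b \mid D_n) = \mathcal{N}(\mu_b, \Sigma_b)$ with $\mu_b$ as in \Cref{equ:batch_mean}, so the conditional expectation equals $\mu_b$. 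The right-hand side is then exactly $\mu(x \mid D_n, (\mathcal{B}, \mu_b))$, the posterior mean computed after hallucinating $\mu_b$ at $\mathcal{B}$.

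The final step adds back the deterministic term $\sqrt{\beta}\,\sigma(x \mid D_n, (\mathcal{B},\mu_b))$. This is legitimate because it equals $\sigma(x \mid D_n, D_b)$ for any choice of values at $\mathcal{B}$, and the sum is the right-hand side of \Cref{equ:prop}.

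The main point needing care is the modelling convention. The hyperparameters $\phi$ must be held fixed, that is, not refit on the hallucinated or sampled data. If they were refit, both $\sigma$ and the weights $w$ would depend on $\mathbf{y}_b$ and the linearity argument would fail. A second convention is that $\mathbf{y}_b$ must be drawn from the noisy predictive distribution. I will state both as part of the setup. For the noise, the distinction between $f$ and $y$ at $\mathcal{B}$ does not affect the mean: the predictive mean of $\mathbf{y}_b$ is $\mu_b$ whether or not noise is added, so this choice matters only to whether $\eta^2$ enters $\Sigma_b$, which drops out.
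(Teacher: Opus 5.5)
Your proposal is correct and follows essentially the same route as the paper. The posterior standard deviation depends only on the inputs $X \cup \mathcal{B}$ and so passes through the expectation. The posterior mean is linear in $\mathbf{y}_b$, so linearity of expectation with $\mathbb{E}[\mathbf{y}_b \mid D_n] = \mu_b$ yields the Kriging Believer mean. Your caveat that the hyperparameters $\phi$ are held fixed (not refit on sampled or hallucinated values) is an assumption the paper's proof uses implicitly, so stating it clarifies rather than departs from the argument.
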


\begin{proof}
    
\begin{align*}
    \mathbb{E}[\alpha_{UCB}(x \mid D_n,  D_b)\mid D_n, \mathcal{B}]
    &= \int \big[\mu(x \mid D_n, D_b) + \sqrt{\beta} \sigma(x\mid D_n,  D_b) \big] \  p(\mathbf{y}_b\mid D_n) \mathrm{d}\mathbf{y}_b\\ 
    &=\int \mu(x \mid D_n, D_b) \  p(\mathbf{y}_b\mid D_n) d\mathbf{y}_b \ + \ \int  \sqrt{\beta}\sigma(x\mid D_n,  D_b) \  p(\mathbf{y}_b\mid D_n) \mathrm{d}\mathbf{y}_b \\
\end{align*}
We note that the value for $\sigma(x| D_n, D_b)$ (as given in Equation~\eqref{equ:post_var}) does not depend on the as-of-yet unknown outputs. This makes the second expectation trivial, allowing the objective to be written as 
\begin{align*}
    \mathbb{E}[\alpha_{UCB}(x \mid D_n,  D_b)\mid D_n, \mathcal{B}]
    &= \int k(x, X\cup\mathcal{B})\Bigg[\begin{bmatrix}
K_{XX} & K_{X\mathcal{B}} \\
K_{\mathcal{B}X} & K_{\mathcal{B}\mathcal{B}}
\end{bmatrix} + \eta^2\mathbf{I}_{n+q-1}) \Bigg]^{-1}\begin{bmatrix}
\mathbf{y} \\
\mathbf{y}_b
\end{bmatrix} \  p(\mathbf{y}_b\mid D_n) \mathrm{d} \mathbf{y}_b \\
&\ \ \ \ \ \ + \sqrt{\beta} \sigma(x \mid X, \mathcal{B}) \\
\end{align*}
The only random variable in the above integral is $\mathbf{y}_b$, which has known expectation $\mu_b$. This integral therefore exactly recovers the prediction if we assume that we will observe the mean value of the GP for the currently busy locations 
\begin{align*}
    \mathbb{E}[\alpha_{UCB}(x \mid D_n,  D_b)\mid D_n, \mathcal{B}] &=\mu(x\mid D_n, (\mathcal{B}, \mu_b)) + \sqrt{\beta} \sigma(x \mid X, \mathcal{B}) \\
    &\\
    &=\alpha_{UCB}(x \mid D_n, (\mathcal{B}, \mu_b)). 
\end{align*}
\end{proof}
We note that this is equivalent to the KB heuristic \citep{ginsbourger2010kriging}.

\section{Implementation details}
\label[appendix]{sec:app_impl}

We implement our optimization pipeline in BoTorch \citep{balandat2020botorch} and GPyTorch \citep{gardner2018gpytorch}, and make the code available at \url{https://github.com/fortuinlab/SAIS-BO}. A zero-mean Gaussian Process prior with ARD-RBF kernel is used to form the surrogate in all experiments. The kernel hyperparameters, $\phi$,  and the observation noise, $\eta^2$,  were fit to optimize the posterior $p(\phi|\mathbf{y})$ \citep{williams2006gaussian}. We use the dimensionality-scaled hyper-prior on lengthscales proposed by \citet{hvarfner2024vanilla}, which is the default in BoTorch. All inputs were normalized to the unit hypercube $[0,1]^d$, and function values were standardized to have zero mean and unit variance.

The optimization of the acquisition function, as well as that of marginal likelihood, was carried out using a multi-restart strategy with the L-BFGS-B algorithm \citep{byrd1995limited}. Following standard practice, we optimize the best $10$ from an initial set of $1000d$ candidates. We deem this to be sufficient to exclude the optimization of the acquisition function as a source of the observed effects.

We kickstart the optimization with $3*d$ initial data points and then initialize the $q$ workers. The input locations of initial data, as well as the first batch of $q$ workers, are drawn from a randomly perturbed Halton sequence in the appropriate dimension \citep{halton1964algorithm}.

The Lipschitz penalizers of LP-UCB and LLP-UCB are approximated in a differentiable manner using $p = -5$, as suggested by \citet{alvi2019asynchronous}, and use $\gamma = 1$ (\Cref{sec:app_incumb}).

The implementation of AEGIS is adopted from the authors \citet{de2021asynchronous}, where the approximate Pareto front is found with code from \url{https://github.com/georgedeath/aegis/blob/main/aegis/batch/nsga2_pygo.py}.

We use $N=500$ \textit{Monte Carlo} samples to estimate the expected EI, since this is the BoTorch default (\url{https://botorch.org/docs/acquisition}).

\section{Toy experiments}
\label[appendix]{sec:app_toys}

A number of additional toy experiments are presented. 

\paragraph{1$-d$ task} \Cref{fig:app_iter} shows the continuation of \Cref{fig:iter}. It can be seen that the trend of the standard UCB and its purpose-built variants querying essentially the same locations, even though the UCB does not take into account the busy locations, persists. 

\paragraph{2$-d$ task}\Cref{fig:app_2d} shows additional query trajectories for standard acquisition, as well as \textit{penalization}- and \textit{randomness}-based methods. Rows 1 and 2, as well as 3 and 4, respectively, share initial data and initial worker locations. It can be seen that the standard methods (UCB, LogEI) gradually transition from an exploration to an exploitation phase and get close to the optimum with the iteration budget. Purpose-built methods do not display this transition from exploration to exploitation and do not generally find the optimum.

\begin{figure}[t]
    \centering

    \includegraphics{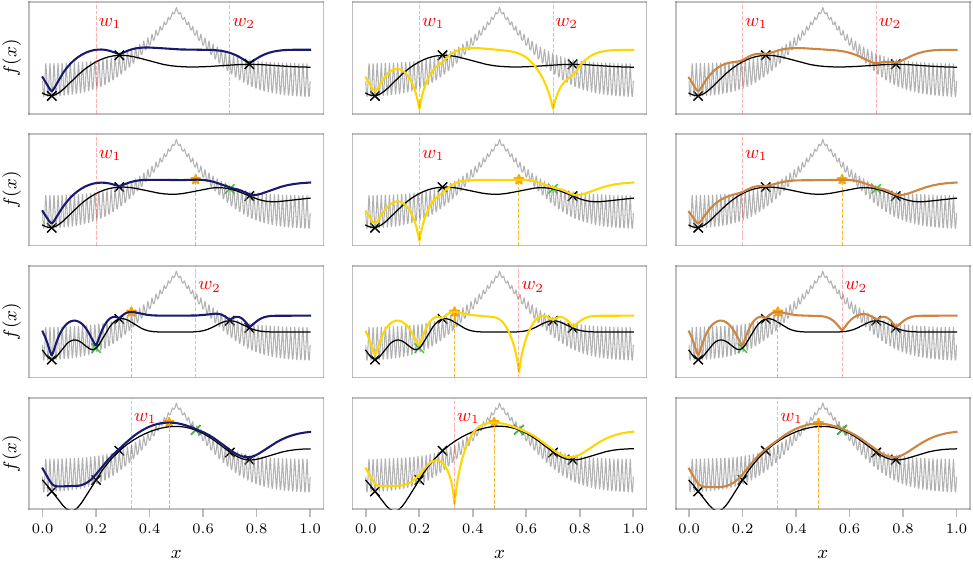}
    \includegraphics{figs/iter_legend.pdf}

    \caption{Continuation of \Cref{fig:iter}. It can be seen that the standard UCB queries essentially the same points as purpose-built methods, without taking into account the busy location. The update via the new sample provides sufficient information to prevent redundant or even repeated queries.}
    \label{fig:app_iter}
\end{figure}

\begin{figure}
    \centering

    \includegraphics{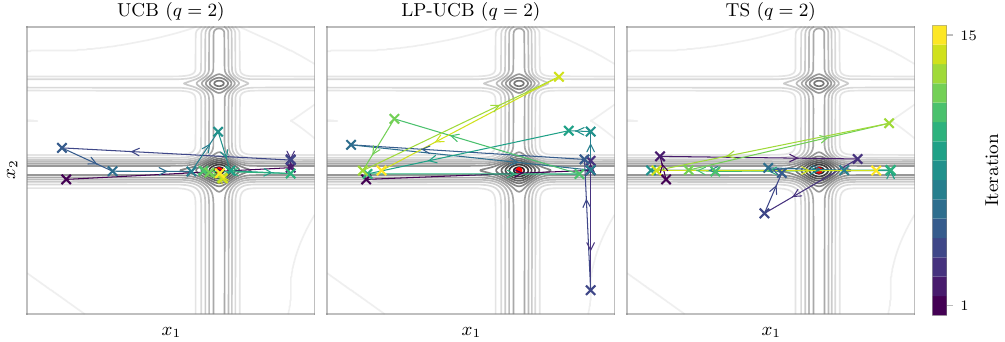}
    \includegraphics{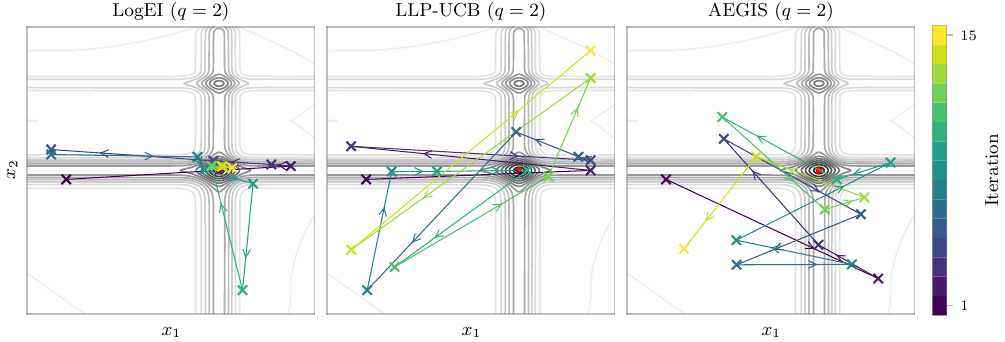}
    \includegraphics{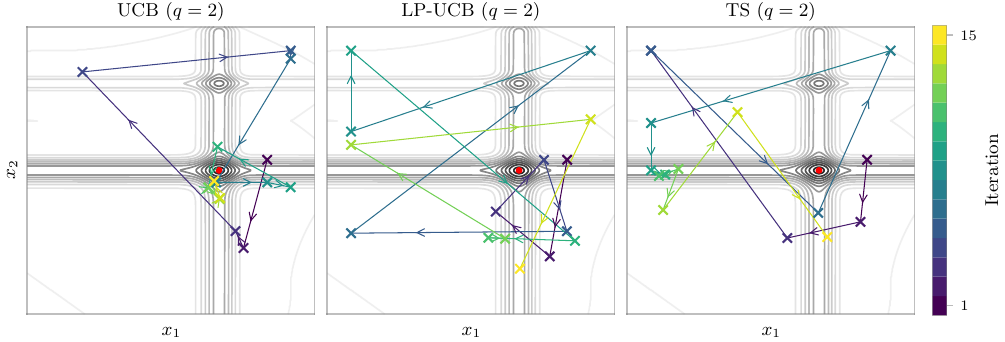}
    \includegraphics{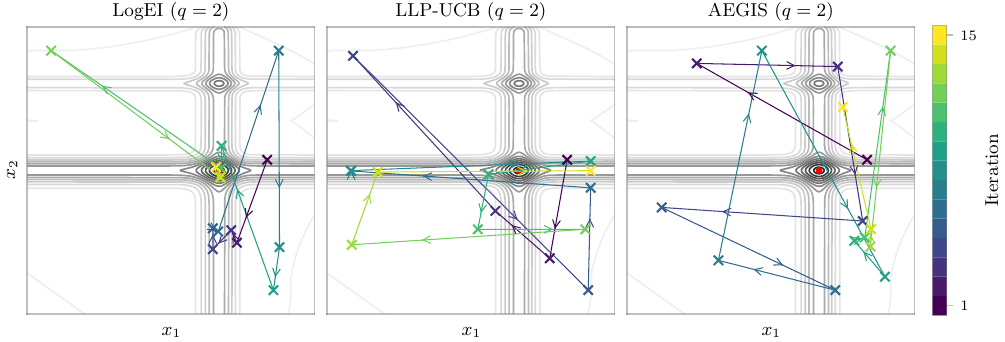}
    \includegraphics{figs/2D_legend.pdf}

    \caption{Query trajectories, see \Cref{sec:app_toys} for a detailed description.}
    \label{fig:app_2d}
\end{figure}

\section{Optimization tasks}
\label[appendix]{sec:app_exper}

\subsection{Synthetic test functions}
The noiseless synthetic test functions we use are available as part of the BoTorch package \citep{balandat2020botorch}. We present experiments in varying dimensions for the following functions: Ackley, Hartmann, Egg~Holder, Michalewicz, and Rosenbrock. Please refer to \url{https://botorch.readthedocs.io/en/latest/test_functions.html#module-botorch.test_functions.synthetic} or \url{https://www.sfu.ca/~ssurjano/optimization.html} for more details.

\subsection{Real-world tasks}
\label[appendix]{sec:real_tasks}
\paragraph{PROTEUS} PROTEUS is a coupled atmosphere-interior framework to simulate the temporal evolution of rocky planets \citep{Lichtenberg_2021_JGRP,Nicholls_2024_JGRP}. This deterministic forward simulator takes in an initial condition, $x$, returning observables $\gamma = PROTEUS(x)$. Given observables $\gamma_0$, one may want to infer the associated initial condition, i.e., perform a point-wise inversion of $PROTEUS(\cdot)$ at $\gamma_0$. We frame this inference as an optimization problem, with
\begin{align}
    \min\limits_{x \in \mathcal{X}} \|PROTEUS(x) - \gamma_0\|^2,
\end{align}
with known optimum $(x^*,0)$ for $x^*$ such that $PROTEUS(x^*) = \gamma_0$.
In this work, nine input variables are considered. Due to its adaptive time stepping and resulting varying evaluation times, PROTEUS naturally lends itself to asynchronous BO. PROTEUS is freely available, and we refer to \url{https://fwl-proteus.readthedocs.io/en/latest/} for detailed instructions on the installation.

\paragraph{Pest control} In this benchmark, the aim is to minimize the spread of pests and the (monetary) cost expended to this end \citep{oh2019combinatorial}. The design space consists of 25 categorical variables with five levels each ($\approx 2.98 \times 10^{17}$ combinatorial choices). This space represents 25 stations and the amount of pesticide used at each. We treat this combinatorial problem as an ordinal one, by discretizing the input space $[0,1]^{25}$ to $\{1,2,3,4,5\}^{25}$, thus creating a step function. This allows the categorical problem to be solved with standard continuous input methodology. The asynchronicity is provided by the varying cost throughout the input space, which we take to be the evaluation time. In order to compute the $\log(R)$, we take $-12$ to be the optimal value. The original code we use for this is from \url{https://github.com/yucenli/bnn-bo} and \url{https://github.com/QUVA-Lab/COMBO}.

\paragraph{XG-Boost} Hyperparameter optimization is a task every ML practitioner faces. The choice of hyperparameters often makes or breaks a model. For XG-Boost, we consider nine hyperparameters, which we tune to optimize the 5-fold cross-validation accuracy. This score is computed from the classification performance on the UCI Breast Cancer data \citep{wolberg1993breast}. The hyperparameters optimized are learning rate, number of boosting rounds (trees), maximum tree depth, minimum loss reduction to make a split, fraction of training examples to grow each tree on, fraction of features to use per tree, fraction of features to use per node (split), as well as the L1 and L2 regularization parameters on the leaf weights. The training time depends, e.g., on the maximum tree depth, resulting in the desired heterogeneous function evaluation times. The optimal value of the cross-validation accuracy is known to be $1$, allowing for computation of the $\log(R)$.

\paragraph{CNN} Arguably the most widely used data set in image classification, CIFAR10 \citep{krizhevsky2009learning} contains $60,000$ $32$x$32$ pixel images in ten classes. We randomly select a training and a validation set of $25,000$ examples each. In our experiment, we learn the hyperparameters of a 6-layer CNN pipeline to optimize validation accuracy after 20 epochs of training. This is framed as a  9-dimensional optimization problem of batch size, learning rate, momentum, and filter sizes for the six filters. At a fixed number of epochs, filter and batch size directly affect the train time of the CNN, creating varying evaluation times throughout the input space. The optimal value of the validation accuracy is known to be $1$, allowing for computation of the $\log(R)$.

\section{Additional experimental results}
\label[appendix]{sec:app_results}

\begin{figure}[t]
    \centering

    \includegraphics[width=\linewidth]{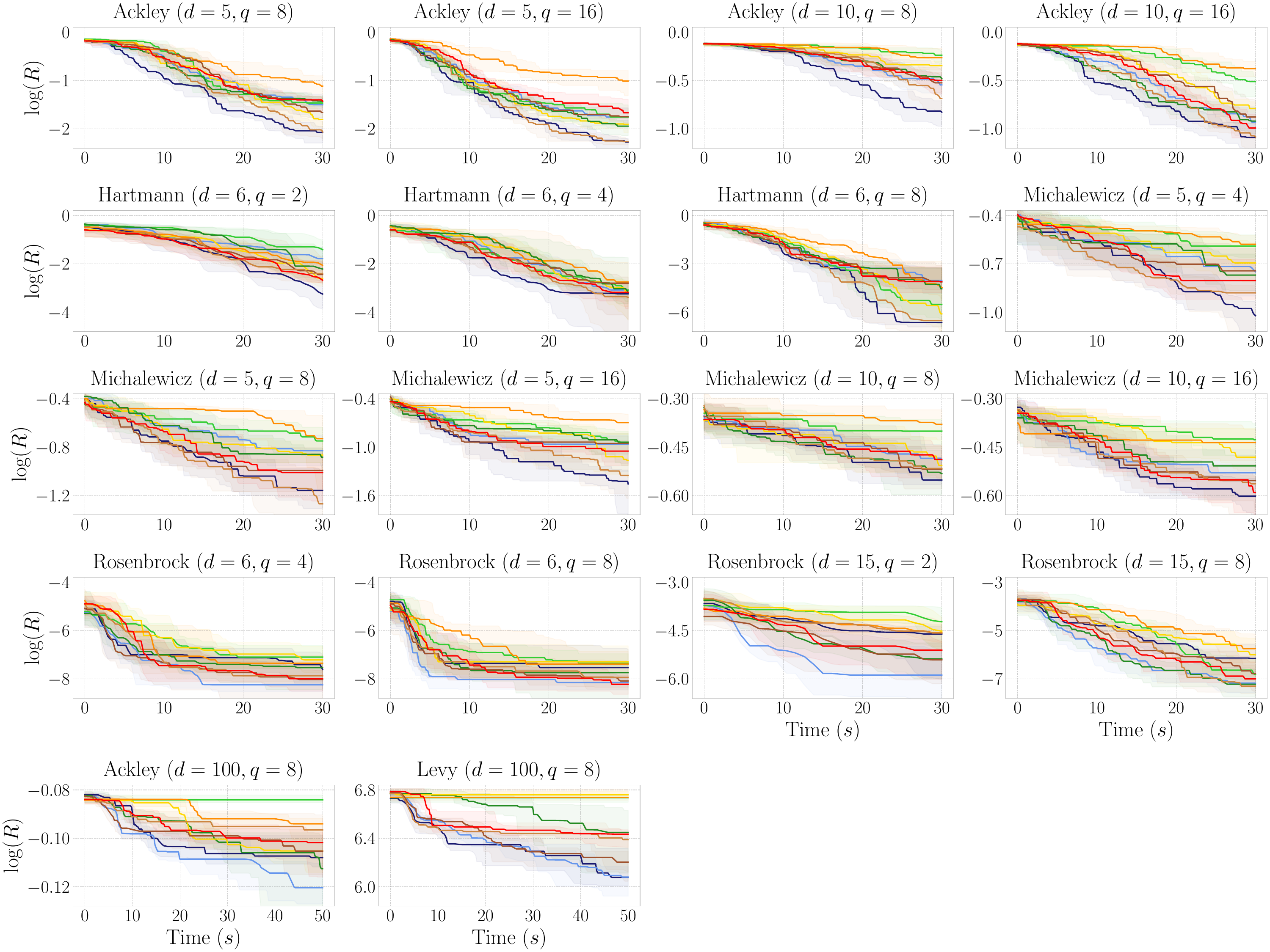}
    \includegraphics{figs/legend2.pdf}

    \caption{Standard acquisition functions are not significantly outperformed on any of our synthetic test functions. In fact, for our experiments, UCB often outperforms alternatives designed for asynchronous BO.}
    \label{fig:app_reg}
\end{figure}

\begin{figure}[t]
    \centering

    \includegraphics[width=0.32\linewidth]{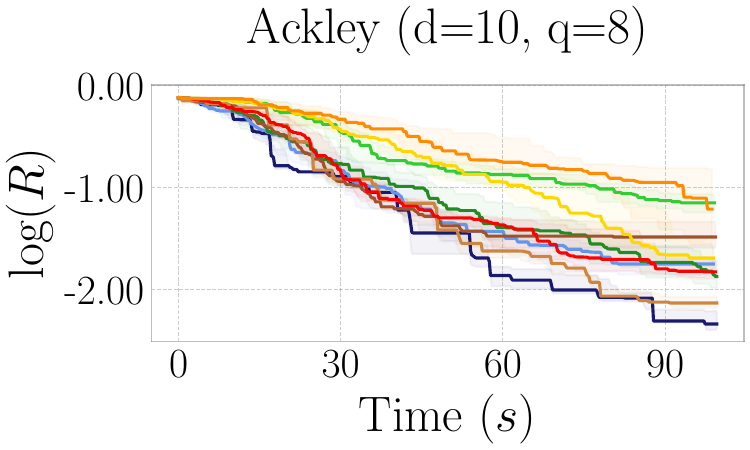}
    \includegraphics[width=0.32\linewidth]{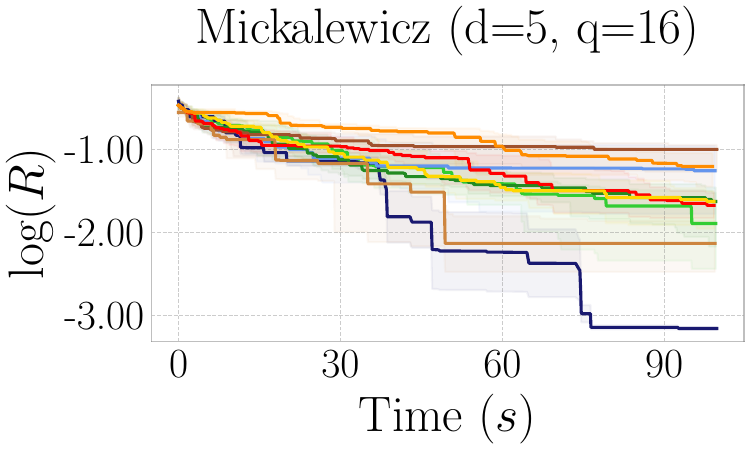}
    \includegraphics[width=0.32\linewidth]{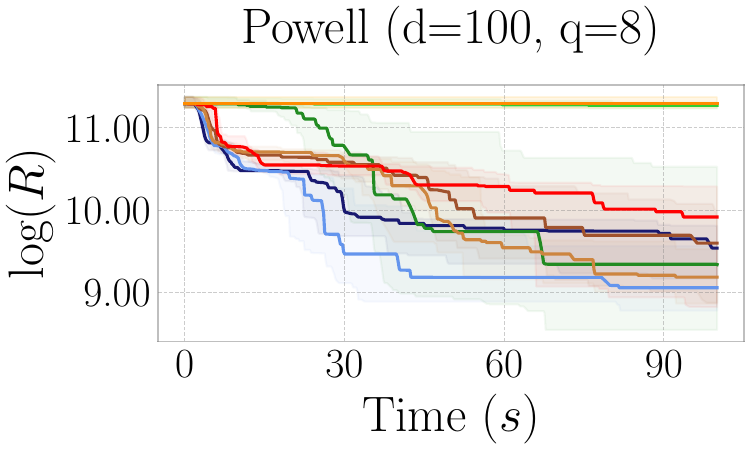}
    \includegraphics{figs/legend2.pdf}

    \caption{Extended run times show qualitatively the same results.}
    \label{fig:app_more_t}
\end{figure}

In \Cref{fig:app_reg,fig:app_dist,fig:app_dist_real,fig:app_dist_seq,fig:app_more_t}, we give additional experimental results on variants of the problems presented in \Cref{sec:empirical}. 

Additionally, we present a quantitative assessment in the form of a win-rate measure and a Mann-Whitney U-test. \Cref{tab:acc_wr,tab:mic_wr} show the win-rate results and should be read as "row-method" was better than "column-method" a fraction "entry" of the 20 trials. It can be seen that this is always larger than 0.5 for the standard UCB, and often even close to 1.0. \Cref{tab:acc_mwu,tab:mic_mwu} show the p-values of a Mann-Whitney U-test, with the null hypothesis: The regret values after $t=30$s of "row-method" and "column-method" follow the same distribution. As can be seen from the tiny p-values, the null can almost always be rejected at very small significance levels.

\begin{figure}[t]
    \centering

    \includegraphics[width=\linewidth]{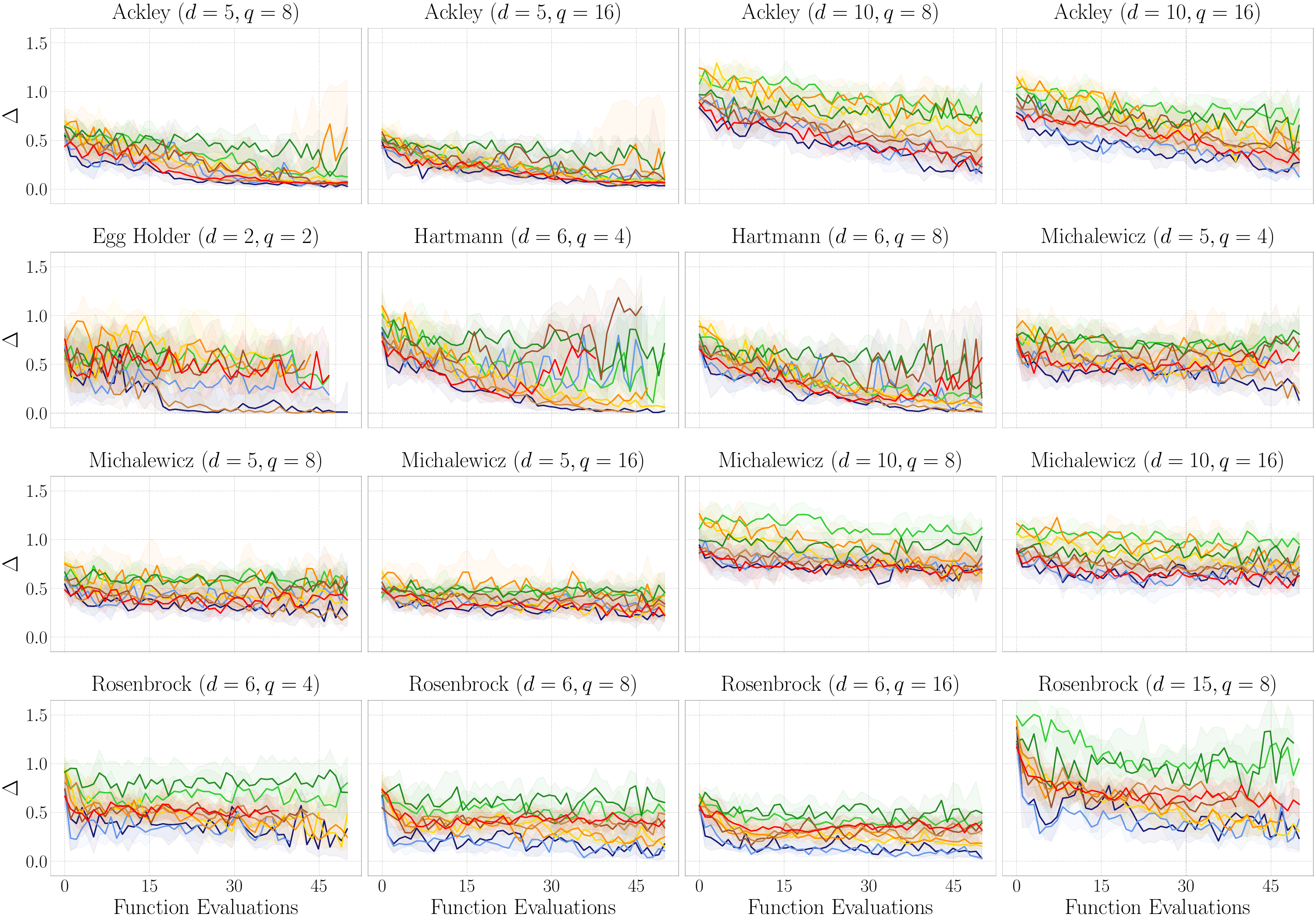}
    \includegraphics{figs/legend2.pdf}

    \caption{Standard acquisition functions query closer to busy locations than alternatives designed for asynchronous BO, but do not systemically repeat queries.}
    \label{fig:app_dist}
\end{figure}

\begin{figure}[t]
    \centering

    \includegraphics[width=\linewidth]{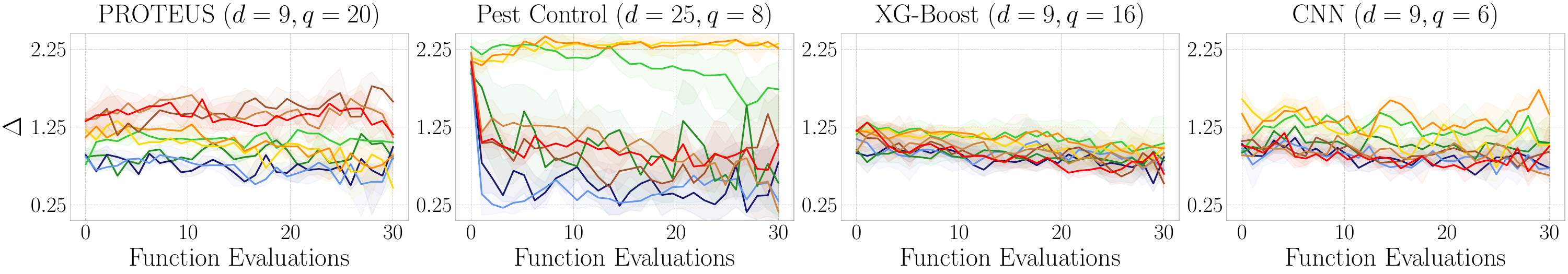}
    \includegraphics{figs/legend2.pdf}

    \caption{While standard acquisition queries the closest to busy locations, distances on real-world tasks are significantly larger than zero for all methods.}
    \label{fig:app_dist_real}
\end{figure}

\begin{figure}[t]
    \centering

    \includegraphics[width=\linewidth]{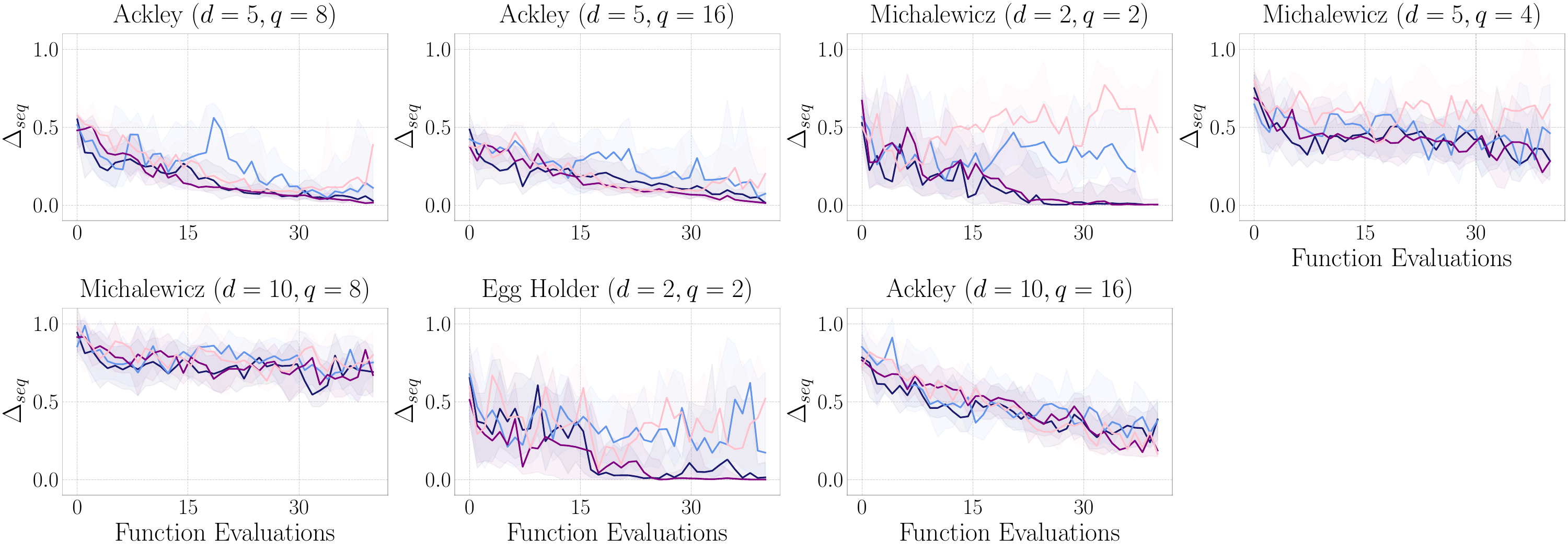}
    \includegraphics{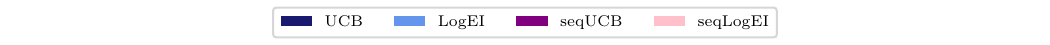}

    \caption{Standard acquisition functions mimic the distance profile of their respective sequential counterparts.}
    \label{fig:app_dist_seq}
\end{figure}

\begin{table}[H]
\centering
\caption{Ackley $(d=10, q=8)$, win-rate @ $t = 30$s}\label{tab:acc_wr}
\begin{tabular}{lcccccccc}
\hline
 & LogEI & TS & LP-UCB & LLP-UCB & AEGIS & KB-LogEI & KB-UCB & E-LogEI \\
\hline
UCB   & 0.85 & 1.0 & 0.95 & 1.0 & 1.0 & 0.95 & 0.75 & 0.9 \\
LogEI &  --  & 0.9 & 0.7  & 0.85 & 0.6 & 0.55 & 0.4  & 0.4 \\
\hline
\end{tabular}
\end{table}

\begin{table}[H]
\centering
\caption{Michalewicz $(d=5, q=16)$, win-rate @ $t = 30$s}\label{tab:mic_wr}
\begin{tabular}{lcccccccc}
\hline
 & LogEI & TS & LP-UCB & LLP-UCB & AEGIS & KB-LogEI & KB-UCB & E-LogEI \\
\hline
UCB   & 0.75 & 0.85 & 0.75 & 1.0 & 1.0 & 0.85 & 0.55 & 0.8 \\
LogEI &  --  & 0.6  & 0.45 & 0.8 & 0.6 & 0.5  & 0.2  & 0.4 \\
\hline
\end{tabular}
\end{table}

\begin{table}[H]
\centering
\caption{Ackley $(d=10, q=8)$, MWU @ $t=30$s}\label{tab:acc_mwu}
\begin{tabular}{lcccccccc}
\hline
 & LogEI & TS & LLP-UCB & AEGIS & KB-LogEI & KB-UCB & E-LogEI \\
\hline
UCB
& $1.79 \times 10^{-4}$
& $1.06 \times 10^{-7}$
& $9.17 \times 10^{-8}$
& $9.13 \times 10^{-7}$
& $9.75 \times 10^{-6}$
& $2.07 \times 10^{-2}$
& $6.67 \times 10^{-6}$ \\

LogEI
& --
& $1.44 \times 10^{-4}$
& $5.09 \times 10^{-4}$
& $5.98 \times 10^{-1}$
& $9.68 \times 10^{-1}$
& $3.15 \times 10^{-2}$
& $9.03 \times 10^{-1}$ \\
\hline
\end{tabular}
\end{table}

\begin{table}[H]
\centering
\caption{Michalewicz $(d=5, q=16)$, MWU @ $t=30$s}\label{tab:mic_mwu}
\begin{tabular}{lcccccccc}
\hline
 & LogEI & TS  & LLP-UCB & AEGIS & KB-LogEI & KB-UCB & E-LogEI \\
\hline
UCB
& $1.12 \times 10^{-3}$
& $8.29 \times 10^{-5}$
& $4.54 \times 10^{-7}$
& $5.87 \times 10^{-6}$
& $2.60 \times 10^{-5}$
& $3.65 \times 10^{-1}$
& $2.75 \times 10^{-4}$ \\

LogEI
& --
& $5.61 \times 10^{-1}$
& $4.60 \times 10^{-4}$
& $3.37 \times 10^{-1}$
& $7.15 \times 10^{-1}$
& $1.14 \times 10^{-2}$
& $5.43 \times 10^{-1}$ \\
\hline
\end{tabular}
\end{table}

\section{Compute}
\label[appendix]{sec:app_comp}
All experiments were conducted on a single NVIDIA V100 GPU with 32 GB of memory. Each synthetic experiment required approximately 8 hours, while the real-world tasks ranged from about 8 hours (Pest Control) to 48 hours (CNN). Naturally, this was preceded by many hours of testing and prototyping.

\end{document}